\documentclass{article}




\usepackage[final]{neurips_2019}


\usepackage[utf8]{inputenc} 
\usepackage[T1]{fontenc}    
\usepackage{hyperref}       
\usepackage{url}            
\usepackage{booktabs}       
\usepackage{amsfonts}       
\usepackage{nicefrac}       
\usepackage{microtype}      

\usepackage{mathextra}
\usepackage{xspace} 
\usepackage{algorithm}
\usepackage[noend]{algpseudocode}
\usepackage{scrextend}
\usepackage{xcolor}
\usepackage{subcaption}

\title{Stochastic Bandits with Context Distributions}

%

\author{%
  Johannes Kirschner\\
  Department of Computer Science\\
  ETH Zurich\\
  \texttt{jkirschner@inf.ethz.ch} \\
   \And
   Andreas Krause \\
   Department of Computer Science \\
   ETH Zurich \\
   \texttt{krausea@ethz.ch} \\
}

%


\begin{document}

\maketitle

\begin{abstract}
We introduce a stochastic contextual bandit model where at each time step the environment chooses a distribution over a context set and samples the context from this distribution. The learner observes only the context distribution while the exact context realization remains hidden. This allows for a broad range of applications where the context is stochastic or when the learner needs to predict the context. We leverage the UCB algorithm to this setting and show that it achieves an order-optimal high-probability bound on the cumulative regret for linear and kernelized reward functions. Our results strictly generalize previous work in the sense that both our model and the algorithm reduce to the standard setting when the environment chooses only Dirac delta distributions and therefore provides the exact context to the learner. We further analyze a variant where the learner observes the realized context after choosing the action. Finally, we demonstrate the proposed method on synthetic and real-world datasets.
\end{abstract}

\section{Introduction}
In the contextual bandit model a learner interacts with an environment in several rounds. At the beginning of each round, the environment provides a context, and in turn, the learner chooses an action which leads to an \mbox{a priori} unknown reward. The learner's goal is to choose actions that maximize the cumulative reward, and eventually compete with the best mapping from context observations to actions. This model creates a dilemma of \emph{exploration and exploitation}, as the learner needs to balance exploratory actions to estimate the environment's reward function, and exploitative actions that maximize the total return. Contextual bandit algorithms have been successfully used in many applications, including online advertisement, recommender systems and experimental design. 

The contextual bandit model, as usually studied in the literature, assumes that the context is observed \emph{exactly}. This is not always the case in applications, for instance, when the context is itself a noisy measurement or a forecasting mechanism. An example of such a context could be a weather or stock market prediction. In other cases such as recommender systems, privacy constraints can restrict access to certain user features, but instead we might be able to infer a distribution over those. To allow for uncertainty in the context, we consider a setting where the environment provides a \emph{distribution over the context set}. The exact context is assumed to be a sample from this distribution, but remains hidden from the learner. Such a model, to the best of our knowledge, has not been discussed in the literature before. Not knowing the context realization makes the learning problem more difficult, because the learner needs to estimate the reward function from noisy observations and without knowing the exact context that generated the reward. Our setting recovers the classical contextual bandit setting when the context distribution is a Dirac delta distribution. We also analyze a natural variant of the problem, where the exact context is observed \emph{after} the player has chosen the action. This allows for different applications, where at the time of decision the context needs to be predicted (e.g. weather conditions), but when the reward is obtained, the exact context can be measured.

We focus on the setting where the reward function is linear in terms of action-context feature vectors. For this case, we leverage the UCB algorithm on a specifically designed bandit instance \emph{without} feature uncertainty to recover an $\oO(d\sqrt{T})$ high-probability bound on the cumulative regret. Our analysis includes a practical variant of the algorithm that requires only sampling access to the context distributions provided by the environment. We also extend our results to the kernelized setting, where the reward function is contained in a known reproducing kernel Hilbert space (RKHS). For this case, we highlight an interesting connection to distributional risk minimization and we show that the natural estimator for the reward function is based on so-called kernel mean embeddings. We discuss related work in Section \ref{sec: related_work}.

\section{Stochastic Bandits with Context Distributions}
We formally define the setting of \emph{stochastic bandits with context distributions} as outlined in the introduction. Let $\xX$ be a set of actions and $\cC$ a context set. The environment is defined by a fixed, but unknown reward function $f : \xX \times \cC \rightarrow \RR$. At iteration $t \in \NN$, the environment chooses a distribution $\mu_t \in \pP(\cC)$ over the context set and samples a context realization $c_t \sim \mu_t$. The learner  observes only $\mu_t$ but not $c_t$, and then chooses an action $x_t \in \xX$. We allow that an \emph{adaptive adversary} chooses the context distribution, that is $\mu_t$ may in an arbitrary way depend on previous choices of the learner up to time $t$. Given the learner's choice $x_t$, the environment provides a reward $y_t = f(x_t, c_t) + \epsilon_t$, where $\epsilon_t$ is $\sigma$-subgaussian, additive noise. 
The learner's goal is to maximize the cumulative reward $\sum_{t=1}^T f(x_t,c_t)$, or equivalently, minimize the cumulative regret
\begin{align}\label{eq:cumulative reg}
\rR_T = \sum_{t=1}^{T} f(x_t^*,c_t) -  f(x_t,c_t)
\end{align}
where $x_t^* = \argmax_{x \in \xX} \EE_{c \sim \mu_t}[f(x,c)]$ is the best action provided that we know $f$ and $\mu_t$, but not $c_t$. Note that this way, we compete with the best possible mapping $\pi^* : \pP(\cC) \rightarrow \xX$ from the observed context distribution to actions, that maximizes the expected reward $\sum_{t=1}^T \EE_{c_t \sim \mu_t}[f(\pi^*(\mu_t),c_t)|\fF_{t-1}, \mu_t]$ where $\fF_t = \{(x_s,\mu_s,y_s)\}_{s=1}^t$ is the filtration that contains all information available at the end of round $t$. It is natural to ask if it is possible to compete with the stronger baseline that chooses actions given the context realization $c_t$, i.e.\ $\tilde{x}_t^* = \argmax_{x\in \xX} f(x,c_t)$. While this can be possible in special cases, a simple example shows, that in general the learner would suffer $\Omega(T)$ regret. In particular, assume that $c_t \sim \text{Bernoulli}(0.6)$ , and $\xX= \{0,1\}$. Let $f(0,c) = c$ and $f(1,c) = 1 - c$. Clearly, any policy that does not know the realizations $c_t$, must have $\Omega(T)$ regret when competing against $\tilde{x}_t^*$.

From now on, we focus on linearly parameterized reward functions $f(x,c) = \phi_{x,c}^\T \theta$ with given feature vectors $\phi_{x,c} \in \RR^d$ for $x\in\xX$ and $c \in \cC$, and unknown parameter $\theta \in \RR^d$. This setup is commonly referred to as the \textit{linear bandit} setting. For the analysis we require standard boundedness assumptions $\|\phi_{x,c}\|_2 \leq 1$ and $\|\theta\|_2 \leq 1$ that we set to 1 for the sake of simplicity. In Section \ref{subsection:context observed}, we further consider a variant of the problem, where the learner observes $c_t$ \emph{after} taking the decision $x_t$. This simplifies the estimation problem, because we have data $\{(x_t,c_t, y_t)\}$ with exact context $c_t$ available, just like in the standard setting. The exploration problem however remains subtle as at the time of decision the learner still knows only $\mu_t$ and not $c_t$. In Section \ref{subsection: rkhs} we extend our algorithm and analysis to \emph{kernelized bandits} where $f\in \hH$ is contained in a reproducing kernel Hilbert space $\hH$.

\section{Background}

We briefly review standard results from the linear contextual bandit literature and the upper confidence bound (UCB) algorithm that we built on later \citep{abbasi2011improved}. The \emph{linear contextual bandit} setting can be defined as a special case of our setup, where the choice of $\mu_t$ is restricted to Dirac delta distributions $\mu_t = \delta_{c_t}$, and therefore the learner knows beforehand the exact context which is used to generate the reward. In an equivalent formulation, the environment provides at time $t$ a set of action-context feature vectors $\Psi_t = \{ \phi_{x,c_t} : x\in \xX \} \subset \RR^d$ and the algorithm chooses an action $x_t$ with corresponding features $\phi_t := \phi_{x_t, c_t} \in \Psi_t$. We emphasize that in this formulation the context $c_t$ is extraneous to the algorithm, and everything can be defined in terms of the time-varying action-feature sets $\Psi_t$. As before, the learner obtains a noisy reward observation $y_t =\phi_t^\T \theta  + \epsilon_t$ where $\epsilon_t$ is conditionally $\rho$-subgaussian with \emph{variance proxy} $\rho$, i.e.\
\begin{align*}
\forall \lambda \in \RR,\qquad \EE[e^{\lambda \epsilon_t}|\fF_{t-1}, \phi_t] \leq \exp(\lambda^2\rho^2/2) \text{ .}
\end{align*}

Also here, the standard objective is to minimize the cumulative regret $\rR_T = \sum_{t=1}^T \phi_t^{*\T} \theta - \phi_t^\T \theta$ where $\phi_t^* = \argmax_{\phi \in \Psi_t} \phi^\T \theta$ is the feature vector of the best action at time $t$. 

To define the UCB algorithm, we make use of the confidence sets derived by \cite{abbasi2011improved} for online least square regression. At the end of round $t$, the algorithm has adaptively collected data  $\{(\phi_1, y_1), \dots, (\phi_t, y_t)\}$ that we use to compute the regularized least squares estimate $\hat{\theta}_t = \argmin_{\theta' \in \RR^d} \sum_{s=1}^t (\phi_s^\T \theta' - y_s)^2 + \lambda\|\theta'\|_2^2$ with $\lambda > 0$. We denote the closed form solution by $\hat{\theta}_t = V_t^{-1}\sum_{s=1}^t \phi_sy_s$ with $V_t = V_{t-1} + \phi_t\phi_t^\T$, $V_0 = \lambda \mathbf{I}_d$  and $\mathbf{I}_d \in \RR^{d\times d}$ is the identity matrix. 
\begin{lemma}[\cite{abbasi2011improved}]\label{thm: confidence bounds}
	For any stochastic sequence $\{(\phi_t, y_t)\}_{t}$ and estimator $\hat{\theta}_t$ as defined above for $\rho$-subgaussian observations, with probability at least $1-\delta$, at any time $t \in \NN$,
	\begin{align*}
	\quad \|\theta - \hat{\theta}_t\|_{V_t} \leq  \beta_t \qquad 	\text{where } \beta_t  = \beta_t(\rho, \delta) = \rho \sqrt{2\log \left(\frac{\det (V_t)^{1/2}}{\delta\det (V_0)^{1/2}} \right)} + \lambda^{1/2} \|\theta\|_2\text{ .}
	\end{align*}
\end{lemma}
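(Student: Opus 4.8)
The plan is to split the estimation error into a deterministic term coming from the ridge penalty and a stochastic term controlled by a self-normalized martingale inequality. Writing $y_s = \phi_s^\T\theta + \epsilon_s$ and expanding the closed form of $\hat\theta_t$, I would first establish the identity
\[
\hat\theta_t - \theta = V_t^{-1}\big(S_t - \lambda\theta\big), \qquad S_t := \sum_{s=1}^t \phi_s\epsilon_s ,
\]
using $\sum_{s=1}^t \phi_s\phi_s^\T = V_t - V_0 = V_t - \lambda\mathbf{I}_d$. Since $\|V_t^{-1}z\|_{V_t} = \|z\|_{V_t^{-1}}$, the triangle inequality in the $V_t$-norm gives $\|\theta-\hat\theta_t\|_{V_t} \le \|S_t\|_{V_t^{-1}} + \lambda\|\theta\|_{V_t^{-1}}$, and the bound $\|\theta\|_{V_t^{-1}} \le \lambda^{-1/2}\|\theta\|_2$ (a consequence of $V_t \succeq \lambda\mathbf{I}_d$) produces the regularization term $\lambda^{1/2}\|\theta\|_2$ exactly as stated. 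This reduces the claim to the self-normalized bound $\|S_t\|_{V_t^{-1}}^2 \le 2\rho^2\log\big(\det(V_t)^{1/2}/(\delta\det(V_0)^{1/2})\big)$, required to hold simultaneously for all $t$.

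The core of the argument is the \emph{method of mixtures}. For a fixed direction $\eta\in\RR^d$ I would define
\[
M_t^\eta = \exp\Big(\eta^\T S_t - \frac{\rho^2}{2}\,\eta^\T(V_t - V_0)\eta\Big)
\]
and verify, using the conditional $\rho$-subgaussianity of $\epsilon_s$ with the free parameter instantiated at $\eta^\T\phi_s$, that each $M_t^\eta$ is a nonnegative supermartingale adapted to $\{\fF_t\}$ with $\EE[M_t^\eta]\le 1$. Mixing over $\eta$ against a centered Gaussian prior with covariance $(\rho^2 V_0)^{-1}$ and interchanging expectation with integration (Fubini), the mixture $\bar M_t = \int M_t^\eta\, dh(\eta)$ is again a nonnegative supermartingale with $\EE[\bar M_t]\le 1$. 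The key computation is the resulting Gaussian integral, which I would evaluate by completing the square in the exponent $\eta^\T S_t - \frac{\rho^2}{2}\eta^\T V_t\eta$ to obtain the closed form
\[
\bar M_t = \Big(\frac{\det V_0}{\det V_t}\Big)^{1/2}\exp\Big(\frac{1}{2\rho^2}\|S_t\|_{V_t^{-1}}^2\Big),
\]
where the $\rho^{2d}$ normalization factors cancel.

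Finally I would invoke Ville's maximal inequality for nonnegative supermartingales, yielding $\Pr[\,\exists t:\ \bar M_t \ge 1/\delta\,]\le\delta\,\EE[\bar M_0]\le\delta$. On the complementary event $\bar M_t < 1/\delta$ for all $t$, taking logarithms in the closed form and rearranging gives precisely $\|S_t\|_{V_t^{-1}}^2 \le 2\rho^2\log\big(\det(V_t)^{1/2}/(\delta\det(V_0)^{1/2})\big)$, which combined with the decomposition of the first paragraph yields $\|\theta-\hat\theta_t\|_{V_t}\le\beta_t$ uniformly in $t$.

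I expect the main obstacle to be the pseudo-maximization step. The quantity $\|S_t\|_{V_t^{-1}}$ is \emph{self-normalized}: its variance proxy $V_t$ depends on the same data $\{\phi_s\}$ as $S_t$, so a naive union bound over a fixed grid of directions $\eta$ would destroy the dimension-free, data-adaptive structure of the bound. The Gaussian mixture is exactly what circumvents this, and the delicate points are justifying the interchange of integration and the supermartingale property, and upgrading a single fixed-horizon statement to one uniform over all $t\in\NN$ through the maximal inequality rather than optional stopping.
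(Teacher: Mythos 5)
Your proposal is correct and is precisely the argument behind the cited result: the paper itself gives no proof of this lemma but imports it from \cite{abbasi2011improved}, whose proof is exactly your combination of the ridge decomposition $\hat\theta_t-\theta=V_t^{-1}(S_t-\lambda\theta)$ with the self-normalized method-of-mixtures bound (Gaussian mixture with covariance $(\rho^2 V_0)^{-1}$, followed by Ville's maximal inequality). Nothing is missing; the details you flag as delicate (the supermartingale property under the mixture and the uniformity in $t$) are handled in the reference just as you describe.
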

Note that the size of the confidence set depends the variance proxy $\rho$, which will be important in the following. In each round $t+1$, the UCB algorithm chooses an action $\phi_{t+1}$, that maximizes an upper confidence bound on the reward, 
\begin{align*}
\phi_{t+1} := \argmax_{\phi \in \Psi_{t+1}} \phi^\T \hat{\theta}_t + \beta_t  \|\phi\|_{V_t^{-1}} \text{ .}
\end{align*}
The following result shows that the UCB policy achieves sublinear regret \citep{dani2008stochastic, abbasi2011improved}.
\begin{lemma}\label{thm:regret ucb standard}
	In the standard contextual bandit setting with $\rho$-subgaussian observation noise, the regret of the UCB policy with $\beta_t = \beta_t(\rho, \delta)$ is bounded with probability $1-\delta$ by
	\begin{align*}
	\rR_T^{UCB} \leq   \beta_T\sqrt{8 T \log \left(\frac{\det V_T}{\det V_0}\right)}
\end{align*}
\end{lemma}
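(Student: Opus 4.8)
The plan is to combine the self-normalized confidence bound of Lemma~\ref{thm: confidence bounds} (\emph{optimism}) with the standard elliptical-potential (determinant) argument. First I would condition on the event $E = \{\|\theta - \hat{\theta}_t\|_{V_t} \le \beta_t \text{ for all } t \in \NN\}$, which by Lemma~\ref{thm: confidence bounds} holds with probability at least $1-\delta$; every subsequent inequality is deterministic on $E$.

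Next I would bound the per-round regret $r_t = \phi_t^{*\T}\theta - \phi_t^\T\theta$. By Cauchy-Schwarz in the $V_{t-1}$-norm and the confidence bound, the true reward of any feature is dominated by its upper confidence index, $\phi_t^{*\T}\theta \le \phi_t^{*\T}\hat{\theta}_{t-1} + \beta_{t-1}\|\phi_t^*\|_{V_{t-1}^{-1}}$. Since $\phi_t$ is the UCB maximizer over $\Psi_t$, which contains $\phi_t^*$, the right-hand side is at most $\phi_t^\T\hat{\theta}_{t-1} + \beta_{t-1}\|\phi_t\|_{V_{t-1}^{-1}}$. Subtracting $\phi_t^\T\theta$ and applying Cauchy-Schwarz once more to the term $\phi_t^\T(\hat{\theta}_{t-1}-\theta)$ gives the key optimistic estimate $r_t \le 2\beta_{t-1}\|\phi_t\|_{V_{t-1}^{-1}}$. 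Combining this with the trivial bound $r_t \le 2$ (from $\|\phi_{x,c}\|_2\le 1$ and $\|\theta\|_2\le 1$) yields $r_t \le 2\beta_{t-1}\min(1, \|\phi_t\|_{V_{t-1}^{-1}})$.

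I would then sum over $t$ and decouple the two factors. Using monotonicity of $\beta_t$ (which holds because $V_t \succeq V_{t-1}$ forces $\det V_t$ to be non-decreasing) to replace $\beta_{t-1}$ by $\beta_T$, and Cauchy-Schwarz over the time index,
\begin{align*}
\rR_T^{UCB} = \sum_{t=1}^T r_t \le 2\beta_T \sum_{t=1}^T \min(1,\|\phi_t\|_{V_{t-1}^{-1}}) \le 2\beta_T\sqrt{T\sum_{t=1}^T \min(1,\|\phi_t\|_{V_{t-1}^{-1}}^2)} \text{ .}
\end{align*}
The final ingredient is the elliptical potential lemma $\sum_{t=1}^T \min(1,\|\phi_t\|_{V_{t-1}^{-1}}^2) \le 2\log(\det V_T/\det V_0)$, which follows from the exact determinant identity $\log(\det V_T/\det V_0) = \sum_t \log(1+\|\phi_t\|_{V_{t-1}^{-1}}^2)$ together with the elementary inequality $x \le 2\log(1+x)$ valid for $x\in[0,1]$. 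Substituting gives $\rR_T^{UCB} \le 2\beta_T\sqrt{2T\log(\det V_T/\det V_0)} = \beta_T\sqrt{8T\log(\det V_T/\det V_0)}$, as claimed.

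As for difficulty: the conceptual heart is the optimism step, but it reduces to two short Cauchy-Schwarz estimates once the confidence set of Lemma~\ref{thm: confidence bounds} is in hand. The genuine technical workhorse is the elliptical potential lemma controlling $\sum_t \|\phi_t\|_{V_{t-1}^{-1}}^2$, and the only subtlety I would watch is ensuring $\|\phi_t\|_{V_{t-1}^{-1}}^2 \le 1$ so that the truncation (equivalently the choice $\lambda \ge 1$) legitimizes the step $x \le 2\log(1+x)$. Since both the confidence set and the potential lemma are available from the cited literature, no new machinery is required.
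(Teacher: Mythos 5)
Your proof is correct and follows exactly the standard argument that the paper relies on for this lemma (cited from \cite{dani2008stochastic,abbasi2011improved}) and reproduces in its own Appendix~\ref{app: proof regret bound observed}: optimism on the confidence event, Cauchy--Schwarz over the time index, and the elliptical-potential/determinant identity, yielding the identical constant $\sqrt{8}$. Your remark about needing $\|\phi_t\|_{V_{t-1}^{-1}}^2\le 1$ (i.e.\ $\lambda\ge 1$) for the step $u\le 2\log(1+u)$ is the same implicit assumption the paper makes, so nothing is missing.
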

The data-dependent terms can be further upper-bounded to obtain  $\rR_T^{UCB} \leq \tilde{\oO}(d\sqrt{T})$ up to logarithmic factors in $T$ \cite[Theorem 3]{abbasi2011improved}. A matching lower bound is given by \citet[Theorem 3]{dani2008stochastic}.

\section{UCB with Context Distributions}

In our setting, where we only observe a context distribution $\mu_t$ (e.g.\ a weather prediction) instead of the context $c_t$ (e.g.\ realized weather conditions), also the features $\phi_{x,c_t}$ (e.g.\ the last layer of a neural network that models the reward $f(x,c) = \phi_{x,c_t}^\T \theta$) are uncertain. We propose an approach that transforms the problem such that we can directly use a contextual bandit algorithm as for the standard setting.  Given the distribution $\mu_t$, we define a new set of feature vectors $\Psi_t = \{\bar{\psi}_{x,\mu_t} : x \in \xX \}$, where we denote by $\bar{\psi}_{x,\mu_t} = \EE_{c \sim \mu_t}[\phi_{x,c}|\fF_{t-1},\mu_t]$ the expected feature vector of action $x$ under $\mu_t$. Each feature $\bar{\psi}_{x,\mu_t}$ corresponds to exactly one action $x \in \xX$, so we can use $\Psi_t$ as feature context set at time $t$ and use the UCB algorithm to choose an action $x_t$. The choice of the UCB algorithm here is only for the sake of the analysis, but any other algorithm that works in the linear contextual bandit setting can be used. The complete algorithm is summarized in \mbox{Algorithm \ref{alg: distribution ucb linear}}. We compute the UCB action $x_t$ with corresponding expected features $\psi_t := \bar{\psi}_{x_t,t} \in \Psi_t$, and the learner provides $x_t$ to the environment. We then proceed and use the reward observation $y_t$ to update the least squares estimate. That this is a sensible approach is not immediate, because $y_t$ is a noisy observation of $\phi_{x_t,c_t}^\T \theta$, whereas UCB expects the reward $\psi_t^\T \theta$. We address this issue by constructing the feature set $\Psi_t$ in such a way, that $y_t$ acts as unbiased observation also for the action choice $\psi_t$. As computing exact expectations can be difficult and in applications often only sampling access of $\mu_t$ is possible, we also analyze a variant of Algorithm \ref{alg: distribution ucb linear} where we use finite sample averages $\tilde{\psi}_{x,\mu_t} = \frac{1}{L} \sum_{l=1}^L \phi_{x,\tilde{c}_l}$ for $L \in \NN$ i.i.d.\ samples $\tilde{c}_l \sim \mu_t$ instead of the expected features $\bar{\psi}_{x,\mu}$. The corresponding feature set is $\tilde{\Psi}_t = \{\tilde{\psi}_{x,\mu_t} : x \in \xX \}$. For both variants of the algorithm we show the following regret bound.
\begin{theorem}\label{thm: main regret bound}
	The regret of Algorithm \ref{alg: distribution ucb linear} with expected feature set $\Psi_t$ and $\beta_t = \beta_t(\sqrt{4 + \sigma^2}, \delta/2)$ is bounded at time $T$ with probability at least $1-\delta$ by
	\begin{align*}
	\rR_T \leq  \beta_T \sqrt{8T  \log \left(\frac{\det V_T}{\det V_0}\right)} + 4\sqrt{2 T \log \frac{4}{\delta}} \text{ .}
	\end{align*}
	Further, for finite action sets $\xX$, if the algorithm uses sampled feature sets $\tilde{\Psi}_t$ with $L=t$ and $\beta_t = \tilde{\beta}_t$ as defined in \eqref{eq:tilde beta}, Appendix \ref{app: proof theorem}, then with probability at least $1-\delta$,
	\begin{align*}
	\rR_T \leq  \tilde{\beta}_T \sqrt{8T  \log \left(\frac{\det V_T}{\det V_0}\right)} + 4\sqrt { 2 T \log \frac{2|\xX|\pi T}{3 \delta}} \text{ .}
	\end{align*}
\end{theorem}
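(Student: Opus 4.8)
The plan is to reduce our problem to the standard linear contextual bandit setting of Lemma~\ref{thm:regret ucb standard}, run on the expected-feature sets $\Psi_t$, and then pay a lower-order additive price for the gap between this surrogate problem and the true regret $\rR_T$. The key observation for the reduction is that although $y_t$ is a noisy observation of $\phi_{x_t,c_t}^\T\theta$ rather than of $\psi_t^\T\theta = \bar\psi_{x_t,\mu_t}^\T\theta$, it is still an \emph{unbiased} observation of the latter. Writing $y_t - \psi_t^\T\theta = (\phi_{x_t,c_t}-\psi_t)^\T\theta + \epsilon_t =: \eta_t$ and conditioning on $\fF_{t-1},\mu_t$ (which determine $x_t$ and $\psi_t$), the definition of $\bar\psi_{x_t,\mu_t}$ as a conditional expectation gives $\EE[\eta_t\mid\fF_{t-1},\mu_t]=0$. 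To invoke Lemma~\ref{thm: confidence bounds} I would verify that $\eta_t$ is $\sqrt{4+\sigma^2}$-subgaussian: the fluctuation $(\phi_{x_t,c_t}-\psi_t)^\T\theta$ lies in $[-2,2]$ by the norm bounds $\|\phi\|_2\le 1,\|\theta\|_2\le 1$, hence is $2$-subgaussian, and conditioning on $c_t$ first peels off the $\sigma$-subgaussian term $\epsilon_t$, so the moment generating functions multiply to give variance proxy $4+\sigma^2$. With this, Lemma~\ref{thm: confidence bounds} holds for $\beta_t=\beta_t(\sqrt{4+\sigma^2},\delta/2)$; and since $\argmax_x \bar\psi_{x,\mu_t}^\T\theta = \argmax_x \EE_{c\sim\mu_t}[f(x,c)] = x_t^*$, the surrogate-optimal action coincides with our benchmark. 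Lemma~\ref{thm:regret ucb standard} then bounds the surrogate regret $\bar\rR_T = \sum_t (\bar\psi_{x_t^*,\mu_t}-\psi_t)^\T\theta$ by $\beta_T\sqrt{8T\log(\det V_T/\det V_0)}$ on an event of probability at least $1-\delta/2$.

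It then remains to control $\rR_T-\bar\rR_T=\sum_t[(f(x_t^*,c_t)-\EE_{c\sim\mu_t}f(x_t^*,c)) - (f(x_t,c_t)-\EE_{c\sim\mu_t}f(x_t,c))]$. Both $x_t^*$ and $x_t$ are $\sigma(\fF_{t-1},\mu_t)$-measurable, so each bracket is a martingale difference with increments in $[-2,2]$, hence $2$-subgaussian by Hoeffding's lemma. I would split this into two martingales and apply a Hoeffding--Azuma bound to each with confidence $\delta/4$, yielding $\rR_T-\bar\rR_T\le 4\sqrt{2T\log(4/\delta)}$ with probability at least $1-\delta/2$. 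A union bound over this event and the confidence-set event gives the first claim.

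For the sampled feature sets $\tilde\Psi_t$ the same template applies, but with an extra layer. Writing $y_t-\tilde\psi_t^\T\theta = \eta_t^0 + b_t$, where $\eta_t^0=(\phi_{x_t,c_t}-\bar\psi_{x_t,\mu_t})^\T\theta+\epsilon_t$ is the zero-mean $\sqrt{4+\sigma^2}$-subgaussian noise as above and $b_t=(\bar\psi_{x_t,\mu_t}-\tilde\psi_t)^\T\theta$ is the sampling bias, the difficulty is that $b_t$ does \emph{not} vanish in expectation once we condition on the realized sampled feature $\tilde\psi_t$ that the regression actually uses. I would control $b_t$ by a concentration bound on $\tilde\psi_{x,\mu_t}-\bar\psi_{x,\mu_t}=\frac1L\sum_l(\phi_{x,\tilde c_l}-\bar\psi_{x,\mu_t})$, made uniform over the finite action set $\xX$ by a union bound (the source of $|\xX|$) and over rounds by a $\sum_t t^{-2}$-weighted union bound (the source of the $\pi^2/6$, i.e.\ the $\pi$ and $T$ factors); choosing $L=t$ makes the per-round error $\oO(1/\sqrt t)$, so $\sum_t|b_t|=\oO(\sqrt T)$ and the total stays $\tilde{\oO}(\sqrt T)$. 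Propagating this bias through the least-squares estimate inflates $\beta_t$ to $\tilde\beta_t$ as defined in \eqref{eq:tilde beta}, and carrying the same uniform concentration into the martingale step replaces $\log(4/\delta)$ by $\log\frac{2|\xX|\pi T}{3\delta}$.

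The crux is precisely this sampled-feature analysis: the mismatch that $y_t$ is unbiased for the \emph{expected} feature's value while the algorithm regresses on the \emph{random} sampled feature produces a bias that cannot be absorbed into the conditional noise. Bounding this bias uniformly over actions and rounds, verifying that $L=t$ samples suffice to keep its cumulative contribution at $\oO(\sqrt T)$, and threading it through the confidence set to obtain the explicit $\tilde\beta_t$ is the main technical work; by comparison the noise-inflation and martingale arguments of the exact case are routine.
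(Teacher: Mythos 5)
Your proposal is correct and follows essentially the same route as the paper: reduce to the UCB regret on the expected-feature surrogate, absorb the feature-realization fluctuation $(\phi_{x_t,c_t}-\psi_t)^\T\theta$ into the observation noise as a $2$-subgaussian term giving variance proxy $4+\sigma^2$, handle the gap $\rR_T-\bar\rR_T$ by Azuma--Hoeffding, and for the sampled variant bound the non-vanishing bias $b_t$ uniformly over $\xX$ and $t$ (with $L=t$) and propagate it into the confidence width to obtain $\tilde\beta_t$. The only piece you leave as a sketch is the propagation of $\sum_t b_t^2$ through the least-squares estimator (the paper does this via the matrix inequality $A^\T(AA^\T+\lambda\mathbf{I})^{-1}A \preceq \mathbf{I}$, yielding $\|\theta-\hat\theta_t\|_{V_t}\le\beta_t+\sqrt{\sum_s b_s^2}$), but your outline correctly identifies this as the needed step.
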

As before, one can further upper bound the data-dependent terms to obtain an overall regret bound of order $\rR_T \leq \tilde{\oO}(d\sqrt{T})$, see \cite[Theorem 2]{abbasi2011improved}.
With iterative updates of the least-squares estimator, the per step computational complexity is $\oO(Ld^2|\xX|)$ if the UCB action is computed by a simple enumeration over all actions.

\begin{algorithm}[t]
	\begin{minipage}{\textwidth}
		Initialize $\hat{\theta} = 0 \in \RR^d$, $V_0 = \lambda \mathbf{I} \in \RR^{d\times d}$\\
		\textbf{For} step $t=1,2,\dots, T$:
		\begin{addmargin}[2em]{0pt}
			\textit{Environment} chooses $\mu_t \in \pP(\cC)$ \hfill \textit{// context distribution}\\
			\textit{Learner} observes $\mu_t$\\
			
			Set $\Psi_t = \{\bar{\psi}_{x,\mu_t} : x \in \xX \}$ with $\bar{\psi}_{x,\mu_t} := \EE_{c\sim \mu_t}[\phi_{x,c}]$ \hfill \textit{// variant 1, expected version}\vspace{2pt}\\
			Alternatively, sample $c_{t,1}, \dots, c_{t,L}$  for $L=t$, \hfill \textit{// variant 2, sampled version}\\
			Set $\Psi_t =\{\tilde{\psi}_{x,\mu_t} : x \in \xX \}$ with $\tilde{\psi}_{x,\mu_t} = \frac{1}{L} \sum_{l=1}^L \phi_{x,\tilde{c}_l}$\\
			
			Run UCB step with $\Psi_t$ as context set \hfill \textit{// reduction}\\
			Choose action $x_t = \argmax_{\psi_{x,\mu_t} \in \Psi_t}  \psi_{x,\mu_t}^\T\hat{\theta}_{t-1} + \beta_t\|\psi_{x,\mu_t}\|_{V_{t-1}^{-1}} $ \hfill \textit{// UCB action}\\
			
			\textit{Environment} provides $y_t = \phi_{x_t, c_t}^\T \theta + \epsilon$ where $c_t \sim \mu_t$ \hfill \textit{// reward observation}\\
			Update $V_{t} = V_{t-1} + \psi_{x_s,\mu_s} \psi_{x_s,\mu_s}^\T$, $\;\hat{\theta}_{t} = V_{t}^{-1}\sum_{s=1}^{t} \psi_{x_s,\mu_s} y_s$  \hfill \textit{// least-squares update}\\
			
		\end{addmargin}
	\end{minipage}
	\caption{UCB for linear stochastic bandits with context distributions}\label{alg: distribution ucb linear}
\end{algorithm}

\subsection{Regret analysis: Proof of Theorem \ref{thm: main regret bound}}

Recall that $x_t$ is the action that the UCB algorithm selects at time $t$, $\psi_t$ the corresponding feature vector in $\Psi_t$ and we define $\psi_t^* = \argmax_{\psi \in \Psi_t} \psi^\T \theta$. We show that the regret $\rR_T$ is bounded in terms of the regret $\rR_T^{UCB} := \sum_{t=1}^T \psi_t^{*\T} \theta - \psi_t^\T \theta$ of the UCB algorithm on the contextual bandit defined by the sequence of action feature sets $\Psi_t$. 
\begin{lemma} \label{lemma: regret in terms of ucb}
 	 The regret of Algorithm \ref{alg: distribution ucb linear} with the expected feature set $\Psi_t$ is bounded at time $T$ with probability at least $1-\delta$,
	\begin{align*}
		\rR_T \leq \mathcal{R}_T^{UCB} + 4 \sqrt{2 T \log \frac{1}{\delta}} \text{ .}
	\end{align*}
	Further, if the algorithm uses the sample based features $\tilde{\Psi}_t$  with $L=t$ at iteration $t$, the regret is bounded at time $T$ with probability at least $1-\delta$,
	\begin{align*}
	\rR_T \leq \mathcal{R}_T^{UCB} + 4\sqrt { 2 T \log \frac{|\xX|\pi T}{3 \delta}} \text{ .}
	\end{align*}
\end{lemma}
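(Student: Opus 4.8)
The plan is to bound the true regret $\rR_T$ against the surrogate regret $\rR_T^{UCB}$ of the reduced linear bandit by controlling, round by round, the gap between the realized reward $f(x,c_t)$ and its conditional mean under $\mu_t$. Writing $g_t(x) := \EE_{c\sim\mu_t}[f(x,c)] = \bar\psi_{x,\mu_t}^\T\theta$, the key observation for the expected-feature variant is that maximizing $\psi^\T\theta$ over $\Psi_t$ coincides with maximizing $g_t(x)$ over $x$, so that $\psi_t^{*\T}\theta = g_t(x_t^*)$ and $\psi_t^\T\theta = g_t(x_t)$, i.e.\ $\rR_T^{UCB} = \sum_{t=1}^T g_t(x_t^*) - g_t(x_t)$. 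Subtracting this from $\rR_T = \sum_t f(x_t^*,c_t) - f(x_t,c_t)$ gives
\begin{align*}
\rR_T - \rR_T^{UCB} = \sum_{t=1}^T \big([f(x_t^*,c_t) - g_t(x_t^*)] - [f(x_t,c_t) - g_t(x_t)]\big) =: \sum_{t=1}^T V_t \text{ .}
\end{align*}
First I would argue that $\{V_t\}$ is a bounded martingale-difference sequence: conditionally on $\fF_{t-1}$ and $\mu_t$, both $x_t$ and $x_t^*$ are determined (they depend on the history and on $\mu_t$, never on the hidden realization $c_t$), and $\EE[f(x,c_t)\mid \fF_{t-1},\mu_t] = g_t(x)$ for any such predictable $x$, whence $\EE[V_t\mid \fF_{t-1},\mu_t]=0$. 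Since $\|\phi_{x,c}\|_2\|\theta\|_2\le 1$ each reward lies in $[-1,1]$, so $|V_t|\le 4$, and Azuma--Hoeffding with $\sum_t 4^2 = 16T$ yields $\sum_t V_t \le 4\sqrt{2T\log(1/\delta)}$ with probability $1-\delta$, which is exactly the first claimed bound.

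For the sampled variant the same reduction gives $\rR_T - \rR_T^{UCB} \le \sum_t [D_t(x_t^*) - D_t(x_t)]$ with $D_t(x) := f(x,c_t) - \tilde\psi_{x,\mu_t}^\T\theta$, where I use that the sampled baseline $\tilde x_t^* = \argmax_x \tilde\psi_{x,\mu_t}^\T\theta$ satisfies $\tilde\psi_{\tilde x_t^*,\mu_t}^\T\theta \ge \tilde\psi_{x_t^*,\mu_t}^\T\theta$, so the optimum term may be replaced by $x_t^*$. The main obstacle is that $x_t$ (and $\tilde x_t^*$) are now computed from the very samples $\tilde c_{t,1},\dots,\tilde c_{t,L}$ that define $\tilde\psi_{x,\mu_t}$, so $D_t(x_t)$ is no longer conditionally mean-zero: its conditional mean equals the estimation bias $g_t(x_t) - \tilde\psi_{x_t,\mu_t}^\T\theta$. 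The crux is therefore to separate this bias from the honest martingale fluctuation. I would split $D_t(x) = [f(x,c_t)-g_t(x)] + [g_t(x) - \tilde\psi_{x,\mu_t}^\T\theta]$; evaluated at $x_t$ and $x_t^*$, the first bracket still forms a bounded ($\le 4$) martingale difference, now conditionally on $\fF_{t-1}$, $\mu_t$ \emph{and} the samples (because $c_t$ is drawn fresh, independently of them), contributing a $4\sqrt{2T\log(\cdot)}$ term exactly as before.

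It then remains to control the empirical-mean deviation $g_t(x) - \tilde\psi_{x,\mu_t}^\T\theta = \frac1L\sum_l (\EE_{c\sim\mu_t}[\phi_{x,c}^\T\theta] - \phi_{x,\tilde c_l}^\T\theta)$ uniformly in $x$. As the summands are i.i.d.\ and lie in $[-1,1]$, Hoeffding's inequality gives an $\oO(\sqrt{\log(1/\delta')/L})$ bound for fixed $x$; a union bound over the finite set $\xX$ and over all rounds $t$ with confidence weights proportional to $t^{-2}$ (so that $\sum_t t^{-2}=\pi^2/6$ keeps the total failure probability at $\delta$) upgrades this to a uniform bound of order $\sqrt{\log(|\xX|\pi T/(3\delta))/L}$, which is precisely where the $|\xX|$ and $\pi T/3$ factors in the statement originate. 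Choosing $L=t$ makes the per-round error $\oO(1/\sqrt t)$, and since $\sum_{t=1}^T t^{-1/2}\le 2\sqrt T$ its cumulative contribution stays of order $\sqrt{T\log(|\xX|T/\delta)}$, matching the martingale term; collecting the two contributions and allocating the confidence budget across the two failure events gives the stated bound with the enlarged logarithmic argument. The only genuinely delicate point throughout is the bias introduced by selecting $x_t$ with the same samples used to build the features; the remainder is boundedness and concentration bookkeeping.
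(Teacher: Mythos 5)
Your proof is correct and follows essentially the same route as the paper: the identity $\rR_T-\rR_T^{UCB}=\sum_t V_t$ with the bounded martingale difference $V_t=(\phi_{x_t^*,c_t}-\bar\psi_{x_t^*,\mu_t}+\bar\psi_{x_t,\mu_t}-\phi_{x_t,c_t})^\T\theta$ plus Azuma--Hoeffding for the expected features, and for the sampled features the same split into a martingale part and a sample-deviation part $(\bar\psi_{x,\mu_t}-\tilde\psi_{x,\mu_t})^\T\theta$ controlled uniformly over $x\in\xX$ and $t$ by Hoeffding with a $t^{-2}$-weighted union bound and $L=t$. The only cosmetic difference is that you bound the sample deviation uniformly at both $x_t$ and $x_t^*$, whereas the paper absorbs the $x_t^*$ deviation into the martingale term; this changes nothing material.
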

\begin{proof}
	Consider first the case where we use the expected features $\bar{\psi}_{x_t,\mu_t}$. We add and subtract $(\bar{\psi}_{x_t^*,\mu_t} - \bar{\psi}_{x_t,\mu_t})^\T \theta$ and use $\bar{\psi}_{x_t^*,\mu_t}^\T \theta \leq \psi_t^{*\T} \theta$ to bound the regret by
	\begin{align*}
	R_T \leq \rR_T^{UCB} + \sum_{t=1}^T D_t\text{ ,}
	\end{align*}
	where we defined $D_t = (\phi_{x_t^*,c_t} - \bar{\psi}_{x_t^*,\mu_t} + \bar{\psi}_{x_t, \mu_t} - \phi_{x_t,c_t})^\T \theta$.
	It is easy to verify that  $\EE_{c_t \sim \mu_t}[D_t|\fF_{t-1}, \mu_t, x_t]=0$, that is $D_t$ is a martingale difference sequence with $|D_t| \leq 4$ and $M_T = \sum_{t=1}^T D_t$ is a martingale. The first part of the lemma therefore follows from Azuma-Hoeffding's inequality (Lemma \ref{lemma:Azuma-Hoeffdings}, Appendix). For the sample-based version, the reasoning is similar, but we need to ensure that the features $\tilde{\psi}_{x,\mu_t} = \frac{1}{L} \sum_{l=1}^L \phi_{x,\tilde{c}_l}$ are sufficiently concentrated around their expected counterparts $\bar{\psi}_{x,\mu_t}$ for any $x \in \xX$ and $t \in \NN$. We provide details in Appendix \ref{app: proof lemma}.
\end{proof}



\begin{proof}[Proof of Theorem \ref{thm: main regret bound}]
Clearly, the lemma gives a regret bound for Algorithm \ref{alg: distribution ucb linear} if the regret term $\rR_T^{UCB}$ is bounded. The main difficulty is that the reward observation $y_t = \phi_{x_t,c_t}^\T \theta + \epsilon_t$ is  generated from a different feature vector than the feature $\psi_t \in \Psi_t$ that is chosen by UCB. Note that in general, it is not even true that $\phi_{x_t,c_t} \in \Psi_t$. However, a closer inspection of the reward signal reveals that $y_t$ can be written as
\begin{align}
y_t = \psi_t^\T\theta+ \xi_t + \epsilon \qquad \text{with} \quad \xi_t:= (\phi_{x_t,c_t}- \psi_t) ^\T \theta
\end{align}
For the variant that uses the expected features $\psi_t = \bar{\psi}_{x_t,\mu_t}$, our construction already ensures that $\EE[\xi_t|\fF_{t-1}, \mu_t, x_t] = \EE[\phi_{x_t,c_t}- \bar{\psi}_{x_t,\mu_t}|\fF_{t-1}, \mu_t, x_t]^\T \theta  = 0$. Note that the distribution of $\xi_t$ depends on $x_t$ and is therefore heteroscedastic in general. However, by boundedness of the rewards, $|\xi_t| \leq 2$ and hence  $\xi_t$ is $2$-subgaussian, which allows us to continue with a homoscedastic noise bound. We see that $y_t$ acts like an observation of $\psi_t^\T\theta$ perturbed by $\sqrt{4 + \sigma^2}$-subgaussian noise (for two independent random variables $X$ and $Y$ that are $\sigma_1$- and $\sigma_2$-subgaussian respectively, $X+Y$ is $\sqrt{\sigma_1^2 + \sigma_2^2}$- subgaussian). Therefore, the construction of the confidence bounds for the least squares estimator w.r.t.\ $\psi_t$ remains valid at the cost of an increased variance proxy, and we are required to use $\beta_t$ with $\rho=\sqrt{4 + \sigma^2}$ in the definition of the confidence set. The regret bound for the UCB algorithm (Lemma \ref{thm:regret ucb standard}) and an application of the union bound completes the proof for this case. When we use the sample-based features $\tilde{\psi}_{x,\mu_t}$, the noise term $\xi_t$ can be biased, because $x_t$ depends on the sampled features and $\EE[\tilde{ \psi}_{x_t,\mu_t}|\fF_{t-1},\mu_t] \neq \bar{\psi}_{x_t,\mu_t}$. This bias carries on to the least-squares estimator, but can be controlled by a more careful analysis. See Appendix \ref{app: proof theorem} for details.
\end{proof}

\subsection{When the context realization is observed}\label{subsection:context observed}
We now turn our attention to the alternative setting, where it is possible to observe the realized context $c_t$ (e.g. actual weather measurements) after the learner has chosen $x_t$. In Algorithm \ref{alg: distribution ucb linear}, so far our estimate $\hat{\theta}_t$ only uses the data $\{(x_s, \mu_s, y_s)\}_{s=1}^t$, but with the context observation we have $\{(x_s, c_s, y_s)\}_{s=1}^t$ available. It makes sense to use the additional information to improve our estimate $\hat{\theta}_t$, and as we show below this reduces the amount the UCB algorithm explores. The pseudo code of the modified algorithm is given in Algorithm \ref{alg: distribution ucb linear with context observation} (Appendix \ref{app: algorithm observed context}), where the only difference is that we replaced the estimate of $\theta$ by the least squares estimate $\hat{\theta}_t = \argmin_{\theta' \in \RR^d} \sum_{s=1}^t (\phi_{x_s,c_s}^\T \theta' - y_s)^2 + \lambda \|\theta'\|_2^2$. Since now the observation noise $\epsilon_t = y_t - \phi_{x_t,c_t}^\T \theta$ is only $\sigma$- subgaussian (instead of $\sqrt{4+\sigma^2}$-subgaussian), we can use the smaller scaling factor $\beta_t$ with $\rho=\sigma$ to obtain a tighter upper confidence bound.

\begin{theorem} \label{thm: regret bound observed}
	The regret of Algorithm \ref{alg: distribution ucb linear with context observation} based on the expected feature sets $\Psi_t$ and $\beta_t = \beta_t(\sigma, 
	\delta/3)$ is bounded with probability at least $1-\delta$ by
	\begin{align*}
	\rR_T \leq \beta_T \sqrt{8  T  \log \left(\frac{\det V_T}{\det V_0}\right)} + 4(1 + \lambda^{-1/2}\beta_T)\sqrt{2 T \log \frac{3}{\delta}}
	\end{align*}
\end{theorem}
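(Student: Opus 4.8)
The plan is to mirror the proof of Theorem~\ref{thm: main regret bound}, but to exploit that observing $c_t$ lets us build the least-squares estimate from the \emph{realized} features $\phi_{x_s,c_s}$, so that the effective observation noise is only $\sigma$-subgaussian. Concretely, with $\phi_t := \phi_{x_t,c_t}$ we have $y_t = \phi_t^\T\theta + \epsilon_t$ with $\epsilon_t$ exactly $\sigma$-subgaussian, and $V_t = V_{t-1} + \phi_t\phi_t^\T$. Lemma~\ref{thm: confidence bounds} then applies verbatim to the sequence $\{(\phi_t,y_t)\}$ with variance proxy $\rho = \sigma$, so that $\|\theta - \hat\theta_{t}\|_{V_{t}} \le \beta_t(\sigma,\delta/3)$ holds simultaneously for all $t$ with probability $1-\delta/3$; this is precisely why we may use the smaller $\beta_t$. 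The regret decomposition of Lemma~\ref{lemma: regret in terms of ucb} is independent of the estimator, so it still yields $\rR_T \le \rR_T^{UCB} + 4\sqrt{2T\log(3/\delta)}$ on an event of probability $1-\delta/3$, and it remains to bound $\rR_T^{UCB} = \sum_t (\psi_t^* - \psi_t)^\T\theta$ where $\psi_t = \bar{\psi}_{x_t,\mu_t}$.

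For the optimism step I would argue exactly as in the standard UCB analysis: Cauchy--Schwarz together with the confidence bound gives $|\psi^\T(\theta-\hat\theta_{t-1})| \le \beta_{t-1}\|\psi\|_{V_{t-1}^{-1}}$ for every candidate feature $\psi$, and combined with the UCB selection rule this produces the per-round bound $(\psi_t^* - \psi_t)^\T\theta \le 2\beta_{t-1}\|\psi_t\|_{V_{t-1}^{-1}}$. This step is insensitive to which features were used to form $V_{t-1}$, since the confidence bound controls $\|\theta-\hat\theta_{t-1}\|_{V_{t-1}}$ directly.

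The hard part --- and the place where this proof genuinely departs from Theorem~\ref{thm: main regret bound} --- is that $V_{t-1}$ is now accumulated from the realized features $\phi_s$, whereas the exploration bonus is evaluated at the expected features $\psi_t$. The elliptical-potential / self-normalized bound therefore controls $\sum_t \|\phi_t\|_{V_{t-1}^{-1}}^2$ but \emph{not} $\sum_t \|\psi_t\|_{V_{t-1}^{-1}}^2$, so I cannot simply invoke the standard UCB regret bound (Lemma~\ref{thm:regret ucb standard}). To bridge this gap I would use that $\psi_t = \EE[\phi_t \mid \fF_{t-1},\mu_t,x_t]$ and that $V_{t-1}$ is $\fF_{t-1}$-measurable, so Jensen's inequality for the convex map $v \mapsto \|v\|_{V_{t-1}^{-1}}$ gives
\[
\|\psi_t\|_{V_{t-1}^{-1}} \;\le\; \EE\big[\|\phi_t\|_{V_{t-1}^{-1}} \,\big|\, \fF_{t-1},\mu_t,x_t\big].
\]
Writing $\eta_t := \EE[\|\phi_t\|_{V_{t-1}^{-1}}\mid\fF_{t-1},\mu_t,x_t] - \|\phi_t\|_{V_{t-1}^{-1}}$, the sequence $\{\beta_{t-1}\eta_t\}$ is a martingale difference sequence, and since $V_{t-1}\succeq \lambda\mathbf{I}$ and $\|\phi_t\|_2\le 1$ we have $|\eta_t|\le 2\lambda^{-1/2}$, hence $|\beta_{t-1}\eta_t| \le 2\lambda^{-1/2}\beta_T$.

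Assembling the pieces, I would bound $\rR_T^{UCB} \le 2\sum_t\beta_{t-1}\|\psi_t\|_{V_{t-1}^{-1}} \le 2\sum_t\beta_{t-1}\|\phi_t\|_{V_{t-1}^{-1}} + 2\sum_t\beta_{t-1}\eta_t$. The first sum is handled by monotonicity $\beta_{t-1}\le\beta_T$, Cauchy--Schwarz in $t$, and the elliptical-potential inequality $\sum_t\|\phi_t\|_{V_{t-1}^{-1}}^2 \le 2\log(\det V_T/\det V_0)$ --- exactly the estimate used inside the proof of Lemma~\ref{thm:regret ucb standard} --- producing the leading term $\beta_T\sqrt{8T\log(\det V_T/\det V_0)}$. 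The second sum is controlled by the Azuma--Hoeffding inequality, giving $4\lambda^{-1/2}\beta_T\sqrt{2T\log(3/\delta)}$ with probability $1-\delta/3$. A union bound over the confidence-set event, the decomposition event of Lemma~\ref{lemma: regret in terms of ucb}, and the Azuma event (each at level $\delta/3$) then combines these with the $4\sqrt{2T\log(3/\delta)}$ term to yield the claimed bound. I expect the only real subtlety to be the $V_t$-versus-bonus mismatch resolved by the Jensen step; everything else is a careful reprise of the standard argument with bookkeeping of the three failure probabilities.
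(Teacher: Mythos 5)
Your proposal is correct and follows essentially the same route as the paper's proof in Appendix~\ref{app: proof regret bound observed}: the same reduction via Lemma~\ref{lemma: regret in terms of ucb}, the same identification of the $V_t$-versus-bonus mismatch, the same Jensen step $\|\bar{\psi}_{x_t,\mu_t}\|_{V_t^{-1}} \leq \EE_{c\sim\mu_t}[\|\phi_{x_t,c}\|_{V_t^{-1}}]$ turning the gap into a (super)martingale controlled by Azuma--Hoeffding with increments bounded by $2\lambda^{-1/2}$, and the same elliptical-potential bound on $\sum_t\|\phi_t\|_{V_t^{-1}}$. The one bookkeeping difference is that you apply Azuma to $\sum_t \beta_{t-1}\eta_t$, whose increment bound $2\lambda^{-1/2}\beta_{t-1}$ is random; the paper avoids this by first bounding $\beta_{t-1}\le\beta_T$ and factoring $\beta_T$ out of the entire sum, applying Azuma only to $\sum_t \eta_t$ --- a one-line rearrangement you should adopt.
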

The importance of this result is that it justifies the use of the smaller scaling $\beta_t$ of the confidence set, which affects the action choice of the UCB algorithm. In practice, $\beta_t$ has a large impact on the amount of exploration, and a tighter choice can significantly reduce the regret as we show in our experiments. We note that in this case, the reduction to the regret bound of UCB is slightly more involved than previously. As before, we use Lemma \ref{lemma: regret in terms of ucb} to reduce a regret bound on $\rR_T$ to the regret $\rR_T^{UCB}$ that the UCB algorithm obtains on the sequence of context-feature sets $\Psi_t$. Since now, the UCB action is based on tighter confidence bounds, we expect the regret $\rR_T^{UCB}$ to be smaller, too. This does not follow directly from the UCB analysis, as there the estimator is based on the features $\bar{\psi}_{x_t,\mu_t}$ instead of $\phi_{x_t,c_t}$. We defer the complete proof to Appendix \ref{app: proof regret bound observed}. There we also show a similar result for the sample based feature sets $\tilde{\Psi}_t$ analogous to Theorem \ref{thm: main regret bound}.

\subsection{Kernelized stochastic bandits with context distributions}\label{subsection: rkhs}

In the kernelized setting, the reward function $f: \xX\times \cC \rightarrow \RR$ is a member of a known reproducing kernel Hilbert space (RKHS) $\hH$ with kernel function $k : (\xX\times \cC)^2 \rightarrow \RR$. In the following, let $\| \cdot \|_\hH$ be the Hilbert norm and we denote by $k_{x,c} := k(x,c, \cdot, \cdot) \in \hH$ the kernel features. For the analysis we further make the standard boundedness assumption $\|f\| \leq 1$ and $\|k_{x,c}\| \leq 1$. We provide details on how to estimate $f$ given data $\{(x_s,\mu_s, y_s)\}_{s=1}^t$ with uncertain context. As in the linear case, the estimator $\hat{f}_t$ can be defined as an empirical risk minimizer with parameter $\lambda > 0$, 
\begin{align}
\hat{f}_t = \argmin_{f \in \hH} \sum_{s=1}^t \big(\EE_{c \sim \mu_t}[f(x_s, c)] - y_s\big)^2 + \lambda \|f\|_\hH^2 \text{ .}\label{eq: distributional risk minimization}
\end{align}

In the literature this is known  as \emph{distributional risk minimization} \citep[Section 3.7.3]{muandet2017kernel}. The following representer theorem shows, that the solution can be expressed as a linear combination of kernel mean embeddings $\bar{k}_{x,\mu} := \EE_{c \sim \mu}[k_{x,c}] \in \hH$.

\begin{theorem}[{\citet[Theorem 1]{muandet2012learning}}]
	Any $f \in \hH$ that minimizes the regularized risk functional \eqref{eq: distributional risk minimization} admits a representation of the form $	f = \sum_{s=1}^t \alpha_s \bar{k}_{x_s, \mu_s}$ for some $\alpha_s \in \RR$.
\end{theorem}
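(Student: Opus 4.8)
The plan is to adapt the classical representer theorem argument based on orthogonal projection onto the span of the relevant features. The crucial first observation is that the data-fitting term in \eqref{eq: distributional risk minimization} depends on $f$ only through finitely many inner products. By the reproducing property, $f(x_s, c) = \langle f, k_{x_s, c}\rangle_\hH$, and therefore
\begin{align*}
\EE_{c \sim \mu_s}[f(x_s, c)] = \EE_{c \sim \mu_s}[\langle f, k_{x_s, c}\rangle_\hH] = \langle f, \EE_{c \sim \mu_s}[k_{x_s, c}]\rangle_\hH = \langle f, \bar{k}_{x_s, \mu_s}\rangle_\hH \text{ ,}
\end{align*}
where I have pulled the expectation inside the inner product. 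Hence the empirical risk is a function of the values $\langle f, \bar{k}_{x_s, \mu_s}\rangle_\hH$ for $s = 1, \dots, t$ alone.

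Next I would decompose any candidate $f \in \hH$ as $f = f_\parallel + f_\perp$, where $f_\parallel$ lies in the subspace $\mathcal{S} := \mathrm{span}\{\bar{k}_{x_s, \mu_s} : s = 1, \dots, t\}$ and $f_\perp$ is orthogonal to $\mathcal{S}$. Since each $\bar{k}_{x_s, \mu_s} \in \mathcal{S}$, orthogonality gives $\langle f, \bar{k}_{x_s, \mu_s}\rangle_\hH = \langle f_\parallel, \bar{k}_{x_s, \mu_s}\rangle_\hH$, so the data-fitting term is unchanged when $f$ is replaced by $f_\parallel$. For the regularizer, the Pythagorean identity yields $\|f\|_\hH^2 = \|f_\parallel\|_\hH^2 + \|f_\perp\|_\hH^2 \geq \|f_\parallel\|_\hH^2$, with equality if and only if $f_\perp = 0$. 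Combining the two, the objective evaluated at $f_\parallel$ is no larger than at $f$, and strictly smaller unless $f_\perp = 0$. Consequently every minimizer must satisfy $f_\perp = 0$, i.e.\ $f \in \mathcal{S}$, which is exactly the claimed representation $f = \sum_{s=1}^t \alpha_s \bar{k}_{x_s, \mu_s}$.

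The main technical point to address carefully is the interchange of expectation and inner product used in the first display, together with the well-definedness of $\bar{k}_{x_s, \mu_s}$ as an element of $\hH$ rather than merely as a scalar-valued expectation. Both are consequences of the boundedness assumption $\|k_{x,c}\|_\hH \leq 1$: the map $c \mapsto k_{x_s, c}$ is Bochner integrable, so its Bochner integral $\bar{k}_{x_s, \mu_s} = \EE_{c \sim \mu_s}[k_{x_s, c}]$ exists in $\hH$, and the interchange $\EE[\langle f, k_{x_s, c}\rangle_\hH] = \langle f, \EE[k_{x_s, c}]\rangle_\hH$ then follows from linearity and continuity of the inner product applied to a Bochner integral. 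Once this is in place, the remainder is the standard projection argument and carries over verbatim from the Euclidean case, with $\bar{k}_{x_s, \mu_s}$ playing the role that the expected feature $\bar{\psi}_{x_s, \mu_s}$ played in the linear setting.
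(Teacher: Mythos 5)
Your proof is correct and follows essentially the same route as the cited source (Muandet et al., 2012, Theorem 1), which the paper relies on without reproducing: the key step of rewriting $\EE_{c \sim \mu_s}[f(x_s,c)] = \langle f, \bar{k}_{x_s,\mu_s}\rangle_\hH$ via Bochner integrability of $c \mapsto k_{x_s,c}$, followed by the classical orthogonal-projection argument, is exactly the standard proof. You also correctly flag the only technical subtlety (well-definedness of the kernel mean embedding in $\hH$ and the interchange of expectation with the inner product), which is indeed covered by the boundedness assumption $\|k_{x,c}\|_\hH \leq 1$.
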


It is easy to verify that the solution to \eqref{eq: distributional risk minimization} can be written as
\begin{align}
\hat{f}_t(x,c) = k_t(x,c)^\T(K_t + \lambda \mathbf{I})^{-1} y_t
\end{align}
where  $k_t(x,c) = [\bar{k}_{x_1,\mu_1}(x,c), \dots, \bar{k}_{x_t,\mu_t}(x,c)]^\T$, $(K_t)_{a,b} = \EE_{c \sim \mu_b}[\bar{k}_{x_a,\mu_a}(x_b,c)]$ for $1 \leq a,b, \leq t$  is the kernel matrix and $y_t = [y_1, \dots, y_t]^T$ denotes the vector of observations. Likewise, the estimator can be computed from sample based kernel mean embeddings $\tilde{k}_{x,\mu}^L := \frac{1}{L}\sum_{i=1}^L k(x,\tilde{c_i}, \cdot, \cdot) \in \hH$ for i.i.d.\ samples $\tilde{c_i} \sim \mu$. This allows for an efficient implementation also in the kernelized setting, at the usual cost of inverting the kernel matrix. With iterative updates the overall cost amount to $\oO(LT^3)$. The cubic scaling in $T$ can be avoided with finite dimensional feature approximations or inducing points methods, e.g.\ \cite{rahimi2008random,mutny2018efficient}.

The UCB algorithm can be defined using an analogous concentration result for the RKHS setting \citep{abbasi2012online}.  We provide details and the complete kernelized algorithm (Algorithm \ref{alg: distribution ucb kernel}) in Appendix \ref{app: kernelized algorithm}. The corresponding regret bound is summarized in the following theorem.

\begin{theorem}\label{thm: regret bound kernelized}
	At any time $T \in \NN$, the regret of Algorithm \ref{alg: distribution ucb kernel} with exact kernel mean embeddings $\bar{k}_{x,c}$ and $\beta_t$ as defined in Lemma \ref{lemma: kernel mean concentration} in Appendix \ref{app: kernelized algorithm}, is bounded with probability at least $1-\delta$ by
	\begin{align*}
	\rR_T \leq  \beta_T \sqrt{8 T  \log (\det(\mathbf{I} + (\lambda\rho)^{-1}K_T))} + 4\sqrt{2 T \log \frac{2}{\delta}}
	\end{align*}
\end{theorem}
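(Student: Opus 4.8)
The plan is to mirror the proof of Theorem~\ref{thm: main regret bound} step by step, replacing finite-dimensional feature vectors by their RKHS counterparts via the reproducing property. Throughout I would write $f(x,c)=\langle f, k_{x,c}\rangle_\hH$ and $\EE_{c\sim\mu}[f(x,c)] = \langle f, \bar{k}_{x,\mu}\rangle_\hH$, so that the mean embedding $\bar{k}_{x,\mu}$ plays exactly the role that $\bar{\psi}_{x,\mu}$ played in the linear case. The bandit instance that Algorithm~\ref{alg: distribution ucb kernel} actually runs UCB on is thus the one whose action--feature set at round $t$ is $\{\bar{k}_{x,\mu_t} : x\in\xX\}$, and its UCB regret is $\rR_T^{UCB} := \sum_{t=1}^T \langle f, \bar{k}_{x_t^*,\mu_t}\rangle - \langle f, \bar{k}_{x_t,\mu_t}\rangle$, where $x_t^* = \argmax_x \langle f, \bar{k}_{x,\mu_t}\rangle$ coincides with the per-round optimal action of the original problem.

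The first step is the RKHS analog of Lemma~\ref{lemma: regret in terms of ucb}. Adding and subtracting the expected-reward terms gives $\rR_T \le \rR_T^{UCB} + \sum_{t=1}^T D_t$ with $D_t = \langle f,\; k_{x_t^*,c_t} - \bar{k}_{x_t^*,\mu_t} + \bar{k}_{x_t,\mu_t} - k_{x_t,c_t}\rangle$. Since $\EE_{c_t\sim\mu_t}[k_{x,c_t}\mid\fF_{t-1},\mu_t]=\bar{k}_{x,\mu_t}$, linearity of the inner product makes $D_t$ a martingale difference with $\EE[D_t\mid \fF_{t-1},\mu_t,x_t]=0$, and the boundedness assumptions $\|f\|_\hH\le 1$, $\|k_{x,c}\|_\hH\le 1$ give $|D_t|\le 4$ via Cauchy--Schwarz. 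Azuma--Hoeffding (Lemma~\ref{lemma:Azuma-Hoeffdings}) then bounds $\sum_t D_t \le 4\sqrt{2T\log(2/\delta)}$ with probability $1-\delta/2$, which is exactly the additive term in the theorem.

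Next I would control $\rR_T^{UCB}$. As in the linear proof, the observed reward decomposes as $y_t = \langle f, \bar{k}_{x_t,\mu_t}\rangle + \xi_t + \epsilon_t$ with $\xi_t = \langle f, k_{x_t,c_t} - \bar{k}_{x_t,\mu_t}\rangle$; this is again a zero-mean, $|\xi_t|\le 2$ (hence $2$-subgaussian) term, so $y_t$ behaves as a noisy observation of $\langle f,\bar{k}_{x_t,\mu_t}\rangle$ under noise with variance proxy $\rho=\sqrt{4+\sigma^2}$. This is precisely the regime covered by the RKHS self-normalized concentration result, so Lemma~\ref{lemma: kernel mean concentration} supplies valid confidence sets for $\hat{f}_t$ with the stated $\beta_t$ at confidence level $\delta/2$. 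The crucial point I would verify explicitly is that the kernel matrix appearing there is the Gram matrix of the mean embeddings, $(K_t)_{a,b}=\langle \bar{k}_{x_a,\mu_a}, \bar{k}_{x_b,\mu_b}\rangle_\hH$, matching the definition of $K_t$ given in the text; consequently the effective feature map feeding the UCB analysis is $x\mapsto\bar{k}_{x,\mu_t}$ and the information-gain term comes out as $\log\det(\mathbf{I}+(\lambda\rho)^{-1}K_T)$.

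Finally, the standard kernelized UCB argument (the RKHS version of Lemma~\ref{thm:regret ucb standard}) applies to this instance: on the confidence event the UCB index upper-bounds $\langle f,\bar{k}_{x_t^*,\mu_t}\rangle$, so the per-round UCB regret is at most twice the confidence width $\beta_t\|\bar{k}_{x_t,\mu_t}\|_{V_{t-1}^{-1}}$, and summing via Cauchy--Schwarz together with the elliptical-potential/information-gain bound yields $\rR_T^{UCB}\le \beta_T\sqrt{8T\log\det(\mathbf{I}+(\lambda\rho)^{-1}K_T)}$. A union bound over the two $\delta/2$ events combines this with the martingale term to give the claim. I expect the main obstacle to be the third step: one must confirm that the heteroscedastic, bounded bias $\xi_t$ is admissible in the self-normalized bound of Lemma~\ref{lemma: kernel mean concentration}, and, more delicately, that every quantity in the kernelized regret bound (the confidence width, the Gram matrix $K_T$, and hence the log-determinant) is consistently expressed through the mean embeddings rather than the original kernel features $k_{x,c}$ that actually generate the rewards.
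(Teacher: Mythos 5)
Your proposal is correct and follows essentially the same route as the paper: reduce $\rR_T$ to the UCB regret on the bandit instance whose features are the kernel mean embeddings $\bar{k}_{x,\mu_t}$ via a martingale-difference/Azuma--Hoeffding argument (the RKHS analog of Lemma~\ref{lemma: regret in terms of ucb}), absorb the hidden-context perturbation $\xi_t$ into the noise as a bounded, zero-mean, $2$-subgaussian term yielding variance proxy $\sqrt{4+\sigma^2}$ in Lemma~\ref{lemma: kernel mean concentration}, and conclude with the standard kernelized UCB regret bound in terms of $\log\det(\mathbf{I}+(\lambda\rho)^{-1}K_T)$. The consistency check you flag (that $K_T$ and the confidence widths are Gram quantities of the mean embeddings, not of $k_{x,c}$) is exactly the point the paper relies on in defining $\hat{f}_t$ and $\sigma_t$.
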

Again, the data dependent log-determinant in the regret bound can be replaced with kernel specific bounds, referred to as \emph{maximum information gain} $\gamma_T$ \citep{srinivas2010gaussian}.

\section{Experiments}

\begin{figure}[t]
	\centering
		\begin{subfigure}[b]{\textwidth}
		\hspace{20px}	\includegraphics{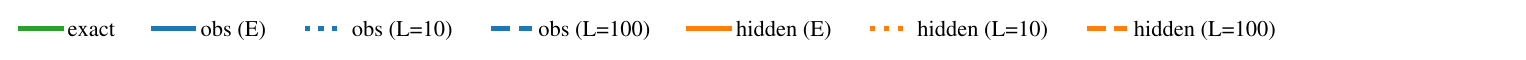}
	\end{subfigure}
	\begin{subfigure}[t]{0.345\textwidth}
		\includegraphics{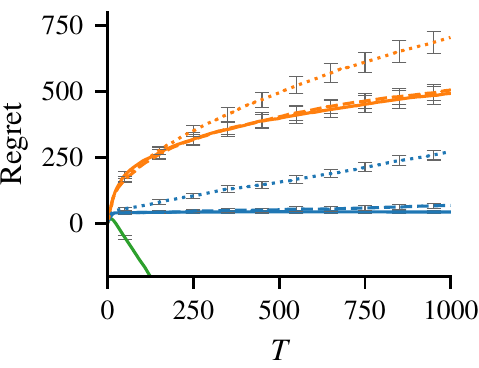}
		\caption{Synthetic Example}
		\label{fig:synthetic}
	\end{subfigure}
	\begin{subfigure}[t]{0.32\textwidth}
	\includegraphics{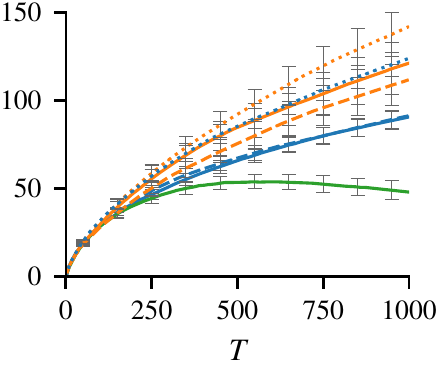}
		\caption{Movielens}
		\label{fig:movielens}
	\end{subfigure}
	\begin{subfigure}[t]{0.32\textwidth}
	\includegraphics{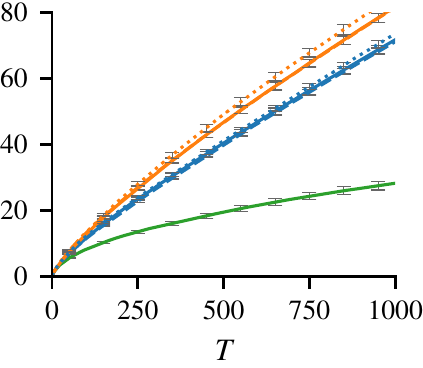}
	\caption{Wheat Yield Data}
	\label{fig:crop}
	\end{subfigure}
	\caption{The plots show cumulative regret as defined in \eqref{eq:cumulative reg}. As expected, the variant that does not observe the context (\emph{hidden}) is out-performed by the variant that uses the context realization for regression (\emph{obs})4. The sample size $l$ used to construct the feature sets from the context distribution has a significant effect on the regret, where with $l=100$ performance is already competitive with the policy that uses the exact expectation over features (E). The \emph{exact} baseline, which has access to the context realization before taking the decision, achieves negative regret on the benchmark (\subref{fig:synthetic}) and (\subref{fig:movielens}), as the regret objective \eqref{eq:cumulative reg} compares to the action maximizing the expected reward. The error bars show two times standard error over 100 trials for (\subref{fig:synthetic}) and (\subref{fig:crop}), and 200 trials for (\subref{fig:movielens}). The variance in the movielens experiment is fairly large, likely because our linear model is miss-specified; and at the first glance, it looks like the sample-based version outperforms the expected version in one case. From repeated trials we confirmed, that this is only an effect of the randomness in the results.}
	\label{fig:experiments}
\end{figure}

We evaluate the proposed method on a synthetic example as well as on two benchmarks that we construct from real-world data. Our focus is on understanding the effect of the sample size $L$ used to define the context set $\tilde{\Psi}_t^l$. We compare three different observational modes, with decreasing amount of information available to the learner. First, in the \emph{exact} setting, we allow the algorithm to observe the context realization before choosing an action, akin to the usual contextual bandit setting. Note that this variant possibly obtains negative reward on the regret objective \eqref{eq:cumulative reg}, because $x_t^*$ is computed to maximize the expected reward over the context distribution independent of $c_t$. Second, in the \emph{observed} setting, decisions are based on the context distribution, but the regression is based on the exact context realization. Last, only the context distribution is used for the \emph{hidden} setting. We evaluate the effect of the sample sizes $L=10,100$ and compare to the variant that uses that exact expectation of the features. As common practice, we treat the confidence parameter $\beta_T$ as tuning parameter that we choose to minimize the regret after $T=1000$ steps. Below we provide details on the experimental setup and the evaluation is shown in Figure \ref{fig:experiments}. In all experiments, the `exact' version significantly outperforms the distributional variants or even achieves negative regret as anticipated. Consistent with our theory, observing the exact context after the action choice improves performance compared to the unobserved variant. The sampled-based algorithm is competitive with the expected features already for $L=100$ samples.

\paragraph{Synthetic Example} As a simple synthetic benchmark we set the reward function to $f(x,c) = \sum_{i=1}^5 (x_i-c_i)^2$, where both actions and context are vectors in $\RR^5$. We choose this quadratic form to create a setting where the optimal action strongly depends on the context $c_i$. As linear parametrization we choose $\phi(x,c) = (x_1^2, \cdots, x_5^2, c_1^2, \cdots, c_5^2, x_1c_1, \dots, x_5c_5)$. The action set consists of $k=100$ elements that we sample at the beginning of each trial from a standard Gaussian distribution. For the context distribution, we first sample a random element $m_t \in \RR^5$, again from a multivariate normal distribution, and then set $\mu_t= \nN(m_t, \mathbf{1})$. Observation noise is Gaussian with standard deviation 0.1. 

\paragraph{Movielens Data} Using matrix factorization we construct 6-dimensional features for user ratings of movies in the \textit{movielens-1m} dataset \citep{harper2016movielens}. We use the learned embedding as ground truth to generate the reward which we round to half-integers between 0 and 5 likewise the actual ratings. Therefore our model is miss-specified in this experiment. Besides the movie ratings, the data set provides basic demographic data for each user. In the interactive setting, the context realization is a randomly sampled user from the data.  The context distribution is set to the empirical distribution of users in the dataset with the same demographic data. The setup is motivated by a setting where the system interacts with new users, for which we already obtained the basic demographic data, but not yet the exact user's features (that in collaborative filtering are computed from the user's ratings). We provide further details in Appendix \ref{app: details movielens}.

\paragraph{Crop Yield Data} We use a wheat yield dataset that was systematically collected  by the Agroscope institute in Switzerland over 15 years on 10 different sites. For each site and year, a 16-dimensional suitability factor based on recorded weather conditions is available. The dataset contains 8849 yield measurements for 198 crops. From this we construct a data set $\dD =\{(x_i, w_i, y_i)\}$ where $x_i$ is the identifier of the tested crop,  $w_i \in \RR^{16+10}$ is a 16 dimensional suitability factor obtained from weather measurements augmented with a 1-hot encoding for each site, and $y_i$ is the normalized crop yield. We fit a bilinear model $y_i \approx w_i^T W V_{x_i}$ to get 5-dimensional features $V_x$ for each variety $x$ and site features $w^\T W$ that take the weather conditions $w$ of the site into account. From this model, we generate the ground-truth reward. Our goal is to provide crop recommendations to maximize yield on a given site with characteristics $w$. Since $w$ is based on weather measurements that are not available ahead of time, we set the context distribution such that each feature of $w$ is perturbed by a Gaussian distribution centered around the true $w$. We set the variance of the perturbation to the empirical variance of the features for the current site over all 15 years. Further details are in Appendix \ref{app: details agroscope}.

\section{Related Work}\label{sec: related_work}

There is a large array of work on bandit algorithms, for a survey see \cite{bubeck2012regret} or the book by \cite{lattimore2018bandit}. Of interest to us is the \emph{stochastic contextual bandit problem}, where the learner chooses actions after seeing a context; and the goal is to compete with a class of policies, that map contexts to actions. This is akin to reinforcement learning \citep{sutton2018reinforcement}, but the contextual bandit problem is different in that the sequence of contexts is typically allowed to be \emph{arbitrary} (even adversarially chosen), and does not necessarily follow a specific transition model. The contextual bandit problem in this formulation dates back to at least \cite{abe1999associative} and  \cite{langford2007epoch}. The perhaps best understood instance of this model is the \emph{linear contextual bandit}, where the reward function is a linear map of feature vectors  \citep{auer2002nonstochastic}. One of the most popular algorithms is the Upper Confidence Bound (UCB) algorithm, first introduced by \cite{auer2002using} for the multi-armed bandit problem, and later extended to the linear case by \cite{li2010contextual}. Analysis of this algorithm was improved by \cite{dani2008stochastic}, \cite{abbasi2011improved} and \cite{li2019tight}, where the main technical challenge is to construct tight confidence sets for an online version of the least squares estimator. Alternative exploration strategies have been considered as well, for instance Thompson sampling \citep{thompson1933}, which was analyzed for the linear model by \cite{agrawal2013thompson} and \cite{abeille2017linear}. Other notable approaches include an algorithm that uses a perturbed data history as exploration mechanism \citep{kveton2019perturbed}, or a \emph{mostly greedy} algorithm that leverages the randomness in the context to obtain sufficient exploration \citep{bastani2017mostly}. In \emph{kernelized bandits} the reward function is contained given reproducing kernel Hilbert space (RKHS). This setting is closely related to Bayesian optimization \citep{mockus1982}. Again, the analysis hinges on the construction of confidence sets and bounding a quantity referred to as \emph{information gain} by the decay of the kernel's eigenspectrum. An analysis of the UCB algorithm for this setting was provided by \cite{srinivas2010gaussian}. It was later refined by \cite{abbasi2012online,valko2013finite,chowdhury2017kernelized, durand2018streaming} and extended to the contextual setting by \cite{krause2011contextual}. 
Interestingly, in our reduction the noise distribution depends on the action, also referred to as \emph{heteroscedastic} bandits. Heteroscedastic bandits where previously considered by \cite{hsieh2018heteroscedastic} and \cite{kirschner18heteroscedastic}. Stochastic uncertainty on the action choice has been studied by \citet{oliveira2019bayesian} in the context of Bayesian optimization. Closer related is the work by \cite{yun2017contextual}, who introduce a linear contextual bandit model where the observed feature is perturbed by noise and the objective is to compete with the best policy that has access to the unperturbed feature vector. The main difference to our setting is that we assume that the environment provides a distribution of feature vectors (instead of a single, perturbed vector) and we compute the best action as a function of the distribution. As a consequence, we are able to obtain $\oO(\sqrt{T})$ regret bounds without further assumptions on the context distribution, while \cite{yun2017contextual} get $\oO(T^{7/8})$ with identical noise on each feature, and $\oO(T^{2/3})$ for Gaussian feature distributions. Most closely related is the work by \citet{lamprier2018profile} on linear bandits with stochastic context. The main difference to our setting is that the context distribution in \cite{lamprier2018profile} is fixed over time, which allows to built aggregated estimates of the mean feature vector over time. Our setting is more general in that it allows an arbitrary sequence of distributions as well as correlation between the feature distributions of different actions. Moreover, in contrast to previous work, we discuss the kernelized-setting and the setting variant, where the context is observed exactly after the action choice.
Finally, also adversarial contextual bandit algorithms apply in our setting, for example the EXP4 algorithm of \citet{auer2002nonstochastic} or ILTCB of \citet{agarwal2014taming}. Here, the objective is to compete with the best policy in a given class of policies, which in our setting would require to work with a covering of the set of distributions $\pP(\cC)$. However, these algorithms do not exploit the linear reward assumption and, therefore, are arguably less practical in our setting. 

\section{Conclusion}

We introduced \emph{context distributions} for stochastic bandits, a model that is naturally motivated in many applications and allows to capture the learner's uncertainty in the context realization. The method we propose is based on the UCB algorithm, and in fact, both our model and algorithm strictly generalize the standard setting in the sense that we recover the usual model and the UCB algorithm if the environment chooses only Dirac delta distributions. The most practical variant of the proposed algorithm requires only sample access to the context distributions and satisfies a high-probability regret bound that is order optimal in the feature dimension and the horizon up to logarithmic factors. 

\subsubsection*{Acknowledgments}
The authors thank Agroscope for providing the crop yield data set, in particular Didier Pellet, Lilia Levy and Juan Herrera, who collected the winter wheat data, and Annelie Holzkämper, who developed the environmental suitability factors model. Further, the authors acknowledge the work by Mariyana Koleva and Dejan Mir\v{c}i\'c, who performed the initial data cleaning and exploration as part of their Master's theses.

This research was supported by SNSF grant 200020 159557 and has received funding from the European Research Council (ERC) under the European Union's Horizon 2020 research and innovation programme grant agreement No 815943.


\bibliography{references.bib}
\bibliographystyle{apalike}

\normalsize
\appendix
\newpage


\section{Proof Details}
\begin{lemma}[Azuma-Hoeffdings]\label{lemma:Azuma-Hoeffdings}
	Let $M_t$ be a martingale on a filtration $\fF_t$ with almost surely bounded increments $|M_t - M_{t-1}| < B$. Then
	\begin{align*}
	\PP[M_T - M_0 > s] \leq \exp\left(-\frac{s^2}{2TB^2}\right)
	\end{align*}
\end{lemma}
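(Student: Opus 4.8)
The plan is to prove this via the exponential moment (Chernoff) method, adapted to the martingale structure. First I would set $D_t := M_t - M_{t-1}$ for $t = 1, \dots, T$, so that $M_T - M_0 = \sum_{t=1}^T D_t$ and, by the martingale property, $\EE[D_t \mid \fF_{t-1}] = 0$ with $|D_t| < B$ almost surely. For a free parameter $\lambda > 0$, applying Markov's inequality to the exponentiated deviation gives
\begin{align*}
\PP[M_T - M_0 > s] = \PP\!\left[e^{\lambda(M_T - M_0)} > e^{\lambda s}\right] \leq e^{-\lambda s}\, \EE\!\left[e^{\lambda \sum_{t=1}^T D_t}\right] \text{ ,}
\end{align*}
so it suffices to control the moment generating function of the increment sum.

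The key step is to peel off the increments one at a time using the tower property. Since the first $T-1$ increments are $\fF_{T-1}$-measurable, I would write
\begin{align*}
\EE\!\left[e^{\lambda \sum_{t=1}^T D_t}\right] = \EE\!\left[e^{\lambda \sum_{t=1}^{T-1} D_t}\, \EE\!\left[e^{\lambda D_T} \mid \fF_{T-1}\right]\right] \text{ ,}
\end{align*}
and then bound the inner conditional expectation. This is where Hoeffding's lemma enters: for a random variable $X$ with $\EE[X] = 0$ and $|X| \leq B$ (hence supported in an interval of length $2B$), one has $\EE[e^{\lambda X}] \leq e^{\lambda^2 B^2 / 2}$, and the same estimate holds conditionally, giving $\EE[e^{\lambda D_T} \mid \fF_{T-1}] \leq e^{\lambda^2 B^2 / 2}$. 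Iterating this argument $T$ times collapses the moment generating function to $\EE[e^{\lambda (M_T - M_0)}] \leq e^{T \lambda^2 B^2 / 2}$.

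Combining the two displays yields $\PP[M_T - M_0 > s] \leq \exp(-\lambda s + T \lambda^2 B^2 / 2)$, valid for every $\lambda > 0$. The final step is to optimize the exponent over $\lambda$: minimizing the quadratic $-\lambda s + T\lambda^2 B^2/2$ gives $\lambda = s/(T B^2)$ and exponent $-s^2/(2TB^2)$, which is exactly the claimed bound. I expect the main obstacle to be the conditional version of Hoeffding's lemma: the increments are not independent, only conditionally mean-zero, so the bound on each $\EE[e^{\lambda D_t}\mid \fF_{t-1}]$ must hold uniformly (almost surely) in the conditioning, which is precisely what makes the telescoping in the tower-property step go through. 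Once that conditional subgaussian estimate is in place, the remaining manipulations are routine.
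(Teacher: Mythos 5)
Your proof is correct and is the standard Chernoff--Hoeffding argument for Azuma's inequality: the conditional Hoeffding lemma applied to each increment, the tower-property telescoping, and the optimization $\lambda = s/(TB^2)$ all check out (note $(2B)^2/8 = B^2/2$, so the constant is right). The paper states this lemma as a known classical result without proof, so there is nothing to compare against; your argument is the canonical one and fills that gap correctly.
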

\subsection{Proof of Lemma \ref{lemma: regret in terms of ucb}} \label{app: proof lemma}

To bound the regret of Algorithm \ref{alg: distribution ucb linear} with sampled features $\tilde{\psi}_{x,\mu_t}$, we add and subtract $(\tilde{\psi}_{x_t^*,\mu_t} + \bar{\psi}_{x_t,\mu_t} + \tilde{\psi}_{x_t,\mu_t})^\T \theta$ to the regret to find
\begin{align*}
\rR_T &= \rR_T^{UCB} + \sum_{t=1}^T(\phi_{x_t^*,c_t} - \tilde{\psi}_{x_t^*,\mu_t} + \bar{\psi}_{x_t,\mu_t} - \phi_{x_t,c_t})^\T \theta + \sum_{t=1}^T (\tilde{\psi}_{x_t,\mu_t} - \bar{\psi}_{x_t,\mu_t})^\T \theta\\
&\leq \rR_T^{UCB} + 4\sqrt{ 2 T \log \frac{1}{\delta}} + \sum_{t=1}^T (\tilde{\psi}_{x_t,\mu_t} - \bar{\psi}_{x_t,\mu_t})^\T \theta \text{ .}
\end{align*}
By the same reasoning as for the expected features, we bounded
\begin{align} \label{eq: concentrated features 2}
\sum_{t=1}^T(\phi_{x_t^*,c_t} - \tilde{\psi}_{x_t^*,\mu_t} + \bar{\psi}_{x_t,\mu_t} - \phi_{x_t,c_t})^\T \theta \leq 4\sqrt{ 2 T \log \frac{1}{\delta}}
\end{align}
using Azuma-Hoeffding's inequality. We are left with a sum $\sum_{t=1}^T (\tilde{\psi}_{x_t,\mu_t} - \bar{\psi}_{x_t,\mu_t})^\T \theta$ that is more intricate to bound because $x_t$ depends on the samples $\tilde{c}_{t,l}$ that define $\tilde{\psi}_{x_t,\mu_t}$. We exploit that for large $L$, $\tilde{\psi}_{x_t,\mu_t}^L - \bar{\psi}_{x_t,\mu_t} \rightarrow 0$. First consider a fixed $x \in \xX$. Then, by $(\tilde{\psi}_{x,\mu_t} - \bar{\psi}_{x,\mu_t})^\T \theta \leq 2$ and Azuma-Hoeffding's inequality with probability at least $1-\delta$,
\begin{align}\label{eq: concentrated features}
(\tilde{\psi}_{x,\mu_t} - \bar{\psi}_{x,\mu_t})^\T \theta \leq \sqrt{\frac{8}{L} \log \frac{1}{\delta}} \text{ .}
\end{align}
We enforce this to hold for any $x \in \xX$ and any time $t \in \NN$ by replacing $\delta$ by $\frac{6 \delta}{|\xX|\pi^2 t^2}$ and taking the union bound over the event where \eqref{eq: concentrated features} holds. By our choice $L=t$ and $\sum_{t=1}^T \frac{1}{\sqrt{t}} \leq 2 \sqrt{T}$, we have with probability at least $1-\delta$, for any $T \in \NN$,
\begin{align} \label{eq: concentrated features 3}
\sum_{t=1}^T (\tilde{\psi}_{x_t,\mu_t} - \bar{\psi}_{x_t,\mu_t})^\T \theta \leq 4\sqrt{T \log \frac{|\xX|\pi^2 T^2}{6 \delta}} \text{ .}
\end{align}
A final application of the union bound over the events such that \eqref{eq: concentrated features 2} and \eqref{eq: concentrated features 3} simultaneously hold, gives
\begin{align*}
R_T \leq \rR_T^{UCB} +  4\sqrt { 2 T \log \frac{|\xX|\pi T}{3 \delta}} \text{ .}
\end{align*}

\subsection{Proof of Theorem \ref{thm: main regret bound}} \label{app: proof theorem}

To bound the regret term $\rR_T^{UCB}$ for the case where we uses sample-based feature vectors, the main task is to show a high-probability bound on $\|\theta - \hat{\theta}_t\|_{V_t}$ with observations $y_t = \tilde{\psi}_{x,t}^\T\theta + \xi_t + \epsilon_t$. Recall that $\xi_t = (\phi_{x,c_t}- \tilde{\psi}_{x_t, \mu_t}) ^\T \theta$, but now we have in general $\EE[\xi_t|\fF_{t-1},\mu_t,x_t] \neq 0$, because $x_t$ depends on the sampled features $\tilde{ \psi}_{x,\mu_t}$. The following lemma bounds the estimation error of the least-square estimator in the case where the noise term contains an (uncontrolled) biased term.
\begin{lemma}
Let $\hat{\theta}_t$ be the least squares estimator $\hat{\theta}_t$ defined for any sequence $\{(\phi_t, y_t)\}_{t}$ with observations $y_t = \phi_t^\T\theta + b_t + \epsilon_t$, where $\epsilon_t$ is $\rho$-subgaussian noise and $b_t$ is an arbitrary bias. The following bound holds with probability at least $1-\delta$, at any time $t \in \NN$,
\begin{align*}
\quad \|\theta - \hat{\theta}_t\|_{V_t} \leq  \beta_t  + \sqrt{\textstyle \sum_{t=1}^T b_t^2}\qquad 	\text{where } \beta_t  = \beta_t(\rho, \delta) = \rho \sqrt{2\log \left(\frac{\det (V_t)^{1/2}}{\delta\det (V_0)^{1/2}} \right)} + \lambda^{1/2} \|\theta\|_2\text{ .}
\end{align*}
\end{lemma}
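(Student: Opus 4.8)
The plan is to reduce everything to the unbiased confidence bound of Lemma~\ref{thm: confidence bounds} by peeling off the bias as a separate, \emph{deterministically} controlled term. First I would introduce a virtual least-squares estimator built from the \emph{bias-corrected} pseudo-observations $y_s - b_s$, namely
\[
\hat{\theta}_t^0 := V_t^{-1}\sum_{s=1}^t \phi_s (y_s - b_s) = V_t^{-1}\sum_{s=1}^t \phi_s(\phi_s^\T\theta + \epsilon_s).
\]
Because the Gram matrix $V_t$ depends only on the features $\phi_s$ and not on the observed values, and because the residuals $y_s - b_s - \phi_s^\T\theta = \epsilon_s$ are exactly $\rho$-subgaussian, the hypotheses of Lemma~\ref{thm: confidence bounds} are met verbatim for the sequence $\{(\phi_s, y_s - b_s)\}_s$. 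Hence, with probability at least $1-\delta$ and simultaneously for all $t$, $\|\theta - \hat{\theta}_t^0\|_{V_t} \le \beta_t$. This lets me import the self-normalized martingale concentration underlying $\beta_t$ as a black box rather than re-deriving it.

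Next I would compare the true estimator $\hat{\theta}_t$ to this virtual one via the triangle inequality in the $V_t$-norm,
\[
\|\theta - \hat{\theta}_t\|_{V_t} \le \|\theta - \hat{\theta}_t^0\|_{V_t} + \|\hat{\theta}_t - \hat{\theta}_t^0\|_{V_t},
\]
and observe that the two estimators differ exactly by the aggregated bias, $\hat{\theta}_t - \hat{\theta}_t^0 = V_t^{-1}\sum_{s=1}^t \phi_s b_s$. Using the elementary identity $\|V_t^{-1}u\|_{V_t} = \|u\|_{V_t^{-1}}$, the whole argument then reduces to the single deterministic estimate $\|\sum_{s=1}^t \phi_s b_s\|_{V_t^{-1}} \le \sqrt{\sum_{s=1}^t b_s^2}$, which is the only genuinely new piece of work.

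For this last step I would write the squared norm as $\|\sum_s \phi_s b_s\|_{V_t^{-1}}^2 = b^\T \Phi_t V_t^{-1}\Phi_t^\T b$, where $\Phi_t$ stacks the $\phi_s^\T$ as rows and $b = (b_1,\dots,b_t)^\T$. The matrix $\Phi_t V_t^{-1}\Phi_t^\T$ is the ridge ``hat'' matrix; by the push-through identity together with $V_t = \lambda\mathbf{I} + \Phi_t^\T\Phi_t$ (or directly from the SVD $\Phi_t = U\Sigma W^\T$) its eigenvalues are $\sigma_i^2/(\lambda + \sigma_i^2)$, each at most $1$. Being a symmetric contraction, it therefore satisfies $b^\T \Phi_t V_t^{-1}\Phi_t^\T b \le \|b\|_2^2 = \sum_{s=1}^t b_s^2$. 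Combining the three displays yields $\|\theta - \hat{\theta}_t\|_{V_t} \le \beta_t + \sqrt{\sum_{s=1}^t b_s^2}$, as claimed.

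The main obstacle is conceptual rather than computational: one must recognize that the bias term decouples \emph{cleanly} from the stochastic fluctuation, so that the arbitrary (adaptive, unbounded, data-dependent) structure of the $b_s$ never interferes with the martingale concentration and instead only enters through a plain $\ell_2$ aggregate. The spectral fact that the ridge hat matrix is a contraction is the crux that converts the $V_t^{-1}$-norm of the aggregated bias into the additive penalty $\sqrt{\sum_s b_s^2}$; everything else is a routine application of the already-established Lemma~\ref{thm: confidence bounds}.
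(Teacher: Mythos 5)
Your proposal is correct and follows essentially the same route as the paper: both decompose the estimation error into the unbiased self-normalized noise term (which you package as a virtual bias-corrected estimator and handle via Lemma~\ref{thm: confidence bounds}) plus the aggregated bias $\|\sum_s \phi_s b_s\|_{V_t^{-1}}$, which is then bounded by $\sqrt{\sum_s b_s^2}$ using the fact that the ridge hat matrix $A^\T(AA^\T+\lambda \mathbf{I})^{-1}A \preceq \mathbf{I}$ is a contraction (the paper's SVD argument is identical to yours). The only cosmetic difference is that the paper splits the noise process off explicitly and cites the self-normalized concentration bound directly, whereas you import it as a black box through the virtual estimator; the substance is the same.
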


\begin{proof}[Proof of Lemma]
Basic linear algebra and the triangle inequality show that
\begin{align}\label{eq: least-squares estimator split}
\|\theta - \hat{\theta}_t\|_{V_t} \leq \|\sum_{s=1}^t \phi_s \epsilon_s\|_{V_s^{-1}} +  \|\sum_{s=1}^t \phi_s b_s  \|_{V_s^{-1}} + \lambda^{1/2} \|\theta\|_2
\end{align}
Recall that we assume $\|\theta\|_2 \leq 1$ in order to bound the last term. The noise process can be controlled with standard results \cite[Theorem 1]{abbasi2011improved}, specifically $ \|\sum_{s=1}^t \phi_s \epsilon_s  \|_{V_s^{-1}}  \leq \beta_t$. Finally, to bound the sum over the biases, set $b = [b_1, \dots, b_t]$ and $A = [\phi_1, \dots, \phi_t]^\T \in \RR^{d\times t}$. The matrix inequality 
\begin{align}
A^\T(AA^\T + \lambda \mathbf{I}_d)^{-1}A \leq \mathbf{I}_t
\end{align}
follows by using a SVD decomposition. This implies
\begin{align*}
\|\sum_{s=1}^t \phi_s b_s \|_{V_s^{-1}}^2 = \|Ab\|_{(AA^\T + \lambda \mathbf{I}_d)^{-1}}^2 \leq \|b\|_2^2 
\end{align*}
Applying the individual bounds to \eqref{eq: least-squares estimator split} completes the proof of the lemma.
\end{proof}

We continue the proof of the theorem with the intuition to use the lemma with $b_t = (\bar{\psi}_{x,\mu_t}-\tilde{ \psi}_{x_t, \mu_t})^\T \theta$. To control the sum over bias terms $b_t$, note that by our choice $L=t$, similar to \eqref{eq: concentrated features 3}, with probability at least $1-\delta$, for all $t\in \NN$ and $x \in \xX$,
\begin{align*}
|b_t| \leq \sqrt{\frac{8}{t} \log \frac{\pi^2 t^2 |\xX|}{6 \delta}} \text{ .}
\end{align*}
Hence with $\sum_{s=1}^t \frac{1}{s}\leq \log(t)$, we get
\begin{align*}
\sum_{t=1}^T  b_t^2 \leq 8\log(T)\log \left( \frac{\pi^2 T^2|\xX|}{6 \delta}\right) \text{ .}  \label{eq:noise process 2}
\end{align*}

As before, the remaining terms are zero-mean, $\EE[(\phi_{x_t,c_t}- \bar{\psi}_{x_t, \mu_t})^\T \theta  + \epsilon_t|\fF_{t-1},\mu_t, x_t] = 0$, and $\sqrt{4+ \sigma^2}$-subgaussian. Hence, with the previous lemma and another application of the union bound we get with probability at least $1-\delta$,
\begin{align}
\|\theta - \hat{\theta}_t\|_{V_t} &\leq \beta_t + \sqrt{\textstyle \sum_{t=1}^T  b_t^2 } + \lambda \|\theta\|_2 \nonumber\\
&\leq \sqrt{ 2 (4 + \sigma^2) \log \left(\frac{2\det (V_t)^{1/2}}{\delta\det (V_0)^{1/2}} \right)}+ \sqrt{ 8\log(T)\log \left( \frac{\pi^2 T^2 |\xX|}{3 \delta}\right)} + \lambda \text{ .}
\end{align}

Finally, we invoke Lemma \ref{thm:regret ucb standard} with $\beta_t = \tilde{\beta}_t$, where
\begin{align}\label{eq:tilde beta}
	\tilde{\beta}_t := \sqrt{ 2 (4 + \sigma^2) \log \left(\frac{2 \det (V_t)^{1/2}}{\delta\det (V_0)^{1/2}} \right)} + \sqrt{ 8\log(T)\log \left( \frac{\pi^2 T^2 |\xX|}{3 \delta}\right)} + \lambda
\end{align}
to obtain a bound on $\rR_T^{UCB}$, 
\begin{align}
\rR_T^{UCB} \leq \tilde{\beta}_T \sqrt{8T  \log \left(\frac{\det V_T}{\det V_0}\right)} \text{ .}
\end{align}
This concludes the proof.

\newpage
\section{UCB with Context Distributions and Observed Context}\label{app: algorithm observed context}


\begin{algorithm}[h]
	\begin{minipage}{\textwidth}
		Initialize $\hat{\theta} = 0 \in \RR^d$, $V_0 = \lambda \mathbf{I} \in \RR^{d\times d}$\\
		\textbf{For} step $t=1,2,\dots, T$:
		\begin{addmargin}[2em]{0pt}
			\textit{Environment} chooses $\mu_t \in \pP(\cC)$ \hfill \textit{// context distribution}\\
			\textit{Learner} observes $\mu_t$\\
			
			Set $\Psi_t = \{\psi_{x,\mu_t} : x \in \xX \}$ with $\bar{\psi}_{x,\mu_t}= \EE_{\mu_t}[\phi_{x,c}]$ \hfill \textit{// expected version}\vspace{2pt}\\
			Alternatively, sample $c_{t,1}, \dots, c_{t,L}$  for $L=t$, \hfill \textit{// or sampled version}\\
			Set $\Psi_t =\{\tilde{\psi}_{x,\mu_t} : x \in \xX \}$ with $\tilde{\psi}_{x,\mu_t} := \frac{1}{L} \sum_{i=1}^L \phi_{x,\tilde{c}_i}$\\
			
			Run UCB step with $\Psi_t$ as context set \hfill \textit{// reduction}\\
			Choose action $x_t = \argmax_{x \in \xX}  \psi_{x,\mu_t}^\T\hat{\theta}_{t-1} + \beta_t(\sigma)\|\psi_{x,\mu_t}\|_{V_{t-1}^{-1}} $ \hfill \textit{// UCB action}\\
			
			\textit{Environment} samples $c_t \sim \mu_t$\\
			\textit{Learner} observes $y_t = \phi_{x_t, c_t}^\T \theta + \epsilon_t$ and $c_t$  \hfill \textit{// reward and context observation}\\
			Update $V_{t} = V_{t-1} + \phi_{x_t, c_t} \phi_{x_t, c_t}^\T$, $\hat{\theta}_{t} = V_{t}^{-1}\sum_{s=1}^{t} \phi_{x_s, c_s} y_s$  \hfill \textit{// least-squares update}\\
			
		\end{addmargin}
	\end{minipage}
	\caption{UCB for linear stochastic bandits with context distributions and observed context}\label{alg: distribution ucb linear with context observation}
\end{algorithm}

\subsection{Proof of Theorem \ref{thm: regret bound observed}} \label{app: proof regret bound observed}
	From Lemma \ref{lemma: regret in terms of ucb} we obtain with probability at least $1-\delta$,
	\begin{align}
	\rR_T \leq \sum_{t=1}^T \psi_t^{*\T}\theta - \bar{\psi}_{x_t,\mu_t}^\T\theta + 4\sqrt{2T\log \frac{1}{\delta}}
	\end{align}
	What is different now, is that the sum $\sum_{t=1}^T \psi_t^{*\T}\theta - \bar{\psi}_{x_t,\mu_t}^\T\theta$ contains actions $x_t$, that are computed from a more precise estimator $\hat{\theta}_t$, hence we expect the regret to be smaller. This does not follow directly from UCB analysis, as there the estimator is computed with the features $\bar{\psi}_{x_t,\mu_t}$.
	
	We start with the usual regret analysis, making use of the confidence bounds (Lemma \ref{thm: confidence bounds}) and the definition of the UCB action. Denote $\psi_t := \bar{\psi}_{x_t,\mu_t}$ in the following.
	\begin{align}
	\sum_{t=1}^T \psi_t^{*\T}\theta - \psi_t^\T \theta \leq \sum_{t=1}^T \psi_t^{*\T}\hat{\theta}_t + \beta_t \|\psi_t^*\|_{V_t^{-1}} - (\phi_t^\T \hat{\theta}_t - \beta_t\|\psi_t\|_{V_t^{-1}})\leq 2 \beta_T \sum_{t=1}^T   \|\psi_t\|_{V_t^{-1}} \label{eq: analysis -2 start}
	\end{align}
	From here, the standard analysis proceeds by using Cauchy-Schwarz to obtain an upper bound on $\sum_{t=1}^T \|\psi_t\|_{V_t^{-1}}$ and then the argument proceeds by simplifying the sum $\sum_{t=1}^T \|\psi_t\|_{V_t^{-1}}$. The simplification doesn't work here because $V_t$ is defined on the realized features $\phi_{x_t, c_t}$ and not $\psi_t$. Instead we require the following intermezzo.
	\begin{align}
	\sum_{t=1}^T \|\psi_t\|_{V_t^{-1}} = \sum_{t=1}^T  \|\phi_{x_s,c_s}\|_{V_t^{-1}} + \|\psi_t\|_{V_t^{-1}} - \|\phi_{x_s,c_s}\|_{V_t^{-1}} \leq \sum_{t=1}^T  \|\phi_{x_s,c_s}\|_{V_t^{-1}} + \sum_{t=1}^T S_t \text{ ,}
	\end{align}
	where we defined $S_t = \|\psi_t\|_{V_t^{-1}} - \|\phi_{x_s,c_s}\|_{V_t^{-1}}$. We show that $\sum_{t=1}^T S_t$ is a supermartingale. For the expected features $\bar{\psi}_{x,\mu_t} = \EE_{c \sim \mu_t}[\phi_{x,c}]$, note that Jensen's inequality yields $\|\EE_{c \sim \mu_t}[\phi_{x,c}]\|_{V_t^{-1}} \leq \EE_{c \sim \mu_t}[\|\phi_{x,c}\|_{V_t^{-1}}]$ for all $x\in\xX$. From this we obtain $\EE[S_t|\fF_{t-1},\mu_t] \leq 0$. Finally, note that  $\|\phi_{x_s,c_s}\|_{V_t^{-1}} \leq \lambda^{-1/2}\|\phi_{x_s,c_s}\|_2 \leq \lambda^{-1/2}$ and $|S_t| \leq 2 \lambda^{-1/2}$, hence by Azuma-Hoeffdings inequality with probability at least $1-\delta$, $\sum_{t=1}^T S_t \leq 2 \lambda^{-1/2} \sqrt{2T\log \frac{1}{\delta}}$. 
	
	From here we complete the regret analysis by bounding  $\sum_{t=1}^T  \|\phi_{x_s,c_s}\|_{V_t^{-1}}$ with the standard argument. Write $\phi_t := \phi_{x_t,c_t}$. First using Cauchy-Schwarz and then $\|\phi_t\|_2 \leq 1$ as well as $u \leq 2\log(1+u)$ for $u\leq 1$, it follows that
	\begin{align}
	\sum_{t=1}^T  \|\phi_{t}\|_{V_t^{-1}} \leq  \sqrt{T\sum_{t=1}^T \|\phi_t\|_{V_t^{-1}}^2}  \leq \sqrt{2 T  \sum_{t=1}^T \log(1 +\|\phi_t\|_{V_t^{-1}}^2)}= \sqrt{2  T \log \left(\frac{\det V_T}{\det V_0}\right)}  \label{eq: analysis -2 end} 
	\end{align}
	The last equality essentially follows from an application of the Sherman-Morrison formula on the matrix $V_t = \sum_{s=1}^t \phi_{x_t,c_t}\phi_{x_t,c_t}^\T + \lambda \mathbf{I}_d$, compare e.g.\ \citep[Lemma 11]{abbasi2011improved}. It remains to assemble the results from equations \eqref{eq: analysis -2 start}-\eqref{eq: analysis -2 end}, and a final application of the union bound completes the proof.
	
	With a bit of extra work, one can obtain a similar bound for the sample-based features. Using Jensen's inequality we get $\| \frac{1}{L}\sum_{i=1}^L \phi_{x,\tilde{c}_i}\|_{V_t^{-1}} \leq  \frac{1}{L}\sum_{i=1}^L\|\phi_{x,\tilde{c}_i}\|_{V_t^{-1}}$, but now the action $x_t$ depends on the samples $\tilde{c}_{t,i}$ that define the features $\tilde{\psi}_{x_t,\mu_t}$ and the previous direct argument does not work. The strategy around is the sames as in the proof of Lemma \ref{lemma: regret in terms of ucb}. For fixed $x \in \xX$, $\frac{1}{L}\sum_{i=1}^L\|\phi_{x,\tilde{c}_i}\|_{V_t^{-1}}$ concentrates around $\EE_{c \sim \mu_t}[\|\phi_{x,c}\|_{V_{t}^{-1}}]$ at a rate $1/\sqrt{L}$, fast enough to get $\oO(\sqrt{T})$ regret if we set $L=t$ in iteration $t$. A careful application of the union bound (again over all $x \in \xX$) completes the proof.

\section{Kernelized UCB with Context Distributions} \label{app: kernelized algorithm}
\subsection{Proof of Theorem \ref{thm: regret bound kernelized}}\label{app: details kernelized algorithm}
To bound the regret we proceed like in the linear case. First, define $\bar{f}_t(x) = \EE_{\mu_t}[f(x,c)|\fF_{t-1}]$. Analogously to Lemma \ref{lemma: regret in terms of ucb}, an application of Azuma-Hoeffding's inequality yields with probability at least $1-\delta$.
\begin{align}
\rR_T \leq \sum_{t=1}^T \bar{f}_t(x_t^*) - \bar{f}_t(x_t^*) + 4\sqrt{2T\log \frac{1}{\delta}}
\end{align}
To bound the sum, we need to understand the concentration behavior of $|\hat{f}_t(x) - \bar{f}_t(x_t^*)|$, where we denote $\hat{f}_t(x) = \EE_{\mu_t}[\hat{f}_t(x,c)|\fF_{t-1}, \mu_t]$. Recall that
\begin{align}
\hat{f}_t(x,c) = k_t(x,c)^\T(K_t + \lambda \mathbf{I})^{-1} y_t
\end{align}
where  $k_t(x,c) = [\bar{k}_{x_1,\mu_1}(x,c), \dots, \bar{k}_{x_t,\mu_t}(x,c)]^\T$, $(K_t)_{i,j} = \EE_{c' \sim \mu_j}[\bar{k}_{x_i,\mu_i}(x_j,c')]$ for $1 \leq i,j, \leq t$  is the kernel matrix and $y_t = [y_1, \dots, y_t]^T$ denotes the vector of observations. Define further $\bar{k}_t(x) = \EE_{c \sim \mu_t}[k_t(x,c)]$.

Note that we can compute $\hat{f}_t(x) = \<\hat{f}, k_{x,\mu_t}\> =  \bar{k}_t(x)^\T(K_t + \lambda \mathbf{I})^{-1} y_t$  according to the inner product $\<\cdot, \cdot\>$ on $\hH$. Concentration bounds for the kernel least squares estimator, that hold for adaptively collected data, are well understood by now, see \cite{srinivas2010gaussian, abbasi2012online, chowdhury2017kernelized, durand2018streaming}. For instance, as a direct corollary of \cite[Theorem 3.11]{abbasi2012online}, we obtain
\begin{lemma}\label{lemma: kernel mean concentration}
	For any stochastic sequence $\{(x_t,\mu_t,y_t)\}_{t \in \NN}$, where $y_t = f(x_t,c_t) + \epsilon_t$ with $\sigma$-subgaussian noise $\epsilon_t$ and $c_t \sim \mu_t$, the kernel-least squares estimate \eqref{eq: distributional risk minimization} satisfies with probability at least $1-\delta$, at any time $t$ and for any $x \in \xX$,
	\begin{align*}
	|\hat{f}_t(x) - \bar{f}_t(x_t^*)| \leq \beta_t \sigma_t(x) \text{ .}
	\end{align*}
	Here we denote,
	\begin{align*}
	\beta_t &= \rho\left(\sqrt{2\log \left(\frac{\det(\mathbf{I} + (\lambda\rho)^{-1}K_t)^{1/2}}{\delta}\right)} + \lambda^{1/2}\|f\|_\hH\right)\text{ ,}\\
	\sigma_{t}^2(x) &= \frac{1}{\lambda}\big(\<k_{x,\mu_t},k_{x,\mu_t}\> - k_t(x)^\T (K_t + \lambda \mathbf{I})^{-1}k_t(x) \big) \text{ ,}
	\end{align*}
	and  $\rho = \sqrt{4+\sigma^2}$  is the subgaussian variance proxy of the observation noise $\rho_t = y_t-\bar{f}_t(x)$. 
\end{lemma}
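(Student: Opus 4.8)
The plan is to reduce the kernelized problem to an ordinary online kernel least-squares regression whose features are the kernel mean embeddings $\bar{k}_{x_s,\mu_s}$, and then invoke the self-normalized confidence bound of \citet[Theorem 3.11]{abbasi2012online}. The first step is to rewrite the reward signal using the reproducing property: $y_t = \<f, k_{x_t,c_t}\> + \epsilon_t = \<f, \bar{k}_{x_t,\mu_t}\> + \xi_t + \epsilon_t = \bar{f}_t(x_t) + \xi_t + \epsilon_t$, where $\xi_t := \<f, k_{x_t,c_t} - \bar{k}_{x_t,\mu_t}\>$ and I use that $\bar{f}_t(x) = \<f, \bar{k}_{x,\mu_t}\>$. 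Exactly as in the linear reduction, $\EE[\xi_t \mid \fF_{t-1}, \mu_t, x_t] = \<f, \EE_{c\sim\mu_t}[k_{x_t,c}] - \bar{k}_{x_t,\mu_t}\> = 0$, so $\xi_t$ is a martingale difference; and by Cauchy--Schwarz together with $\|f\|_\hH \leq 1$, $\|k_{x,c}\|_\hH \leq 1$, we get $|\xi_t| \leq \|f\|_\hH\,\|k_{x_t,c_t} - \bar{k}_{x_t,\mu_t}\|_\hH \leq 2$. Hence $\xi_t$ is $2$-subgaussian, the combined perturbation $\rho_t = \xi_t + \epsilon_t$ is $\sqrt{4+\sigma^2}$-subgaussian, and this is precisely what fixes the variance proxy $\rho = \sqrt{4+\sigma^2}$ in the statement.

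The second step is to recognize $\hat{f}_t(x) = \<\hat{f}_t, k_{x,\mu_t}\> = \bar{k}_t(x)^\T(K_t + \lambda\mathbf{I})^{-1}y_t$ as exactly the kernel ridge predictor built from the embedding features $\{\bar{k}_{x_s,\mu_s}\}_{s\leq t}$ and evaluated at the test embedding $\bar{k}_{x,\mu_t}$. For the reduction to be legitimate I would verify that $K_t$ is genuinely the Gram matrix of these embeddings: since $\bar{k}_{x_i,\mu_i}(x_j,c') = \<\bar{k}_{x_i,\mu_i}, k_{x_j,c'}\> = \EE_{c\sim\mu_i}[k(x_i,c,x_j,c')]$, taking $\EE_{c'\sim\mu_j}$ gives $(K_t)_{i,j} = \EE_{c'\sim\mu_j}\EE_{c\sim\mu_i}[k(x_i,c,x_j,c')] = \<\bar{k}_{x_i,\mu_i}, \bar{k}_{x_j,\mu_j}\>$. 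Thus the data-generating process is genuinely an RKHS regression in the mean-embedding features with responses $y_s = \<f, \bar{k}_{x_s,\mu_s}\> + \rho_s$, and the stated $\sigma_t^2(x)$ is the usual kernel-regression posterior variance at $\bar{k}_{x,\mu_t}$, with $\<k_{x,\mu_t}, k_{x,\mu_t}\>$ serving as the self-kernel.

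With the problem in this form, the final step is a direct application of \citet[Theorem 3.11]{abbasi2012online} to the embedding-feature sequence under $\sqrt{4+\sigma^2}$-subgaussian noise, yielding the self-normalized bound $|\hat{f}_t(x) - \bar{f}_t(x)| \leq \beta_t\,\sigma_t(x)$ uniformly over $x \in \xX$ and $t \in \NN$ with $\beta_t$ as stated. I expect the main obstacle to be handling the action-dependent, heteroscedastic bias $\xi_t$ cleanly: its conditional distribution depends on $x_t$, so one must argue, as in the linear case, that boundedness $|\xi_t| \leq 2$ lets a single homoscedastic subgaussian bound subsume the action-dependent noise, while still respecting that both the features $\bar{k}_{x_s,\mu_s}$ and $\xi_s$ are adapted to $\fF_{s-1}$ augmented by $\mu_s$, which is exactly the predictability required by the online concentration result. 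For the sample-based variant one would additionally control the concentration of the empirical embeddings $\tilde{k}^L_{x,\mu}$ around $\bar{k}_{x,\mu}$, mirroring the linear analysis, but the exact-embedding statement follows from the three steps above.
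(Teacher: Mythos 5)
Your proposal is correct and follows essentially the same route as the paper, which likewise treats the lemma as a direct corollary of Theorem 3.11 of Abbasi-Yadkori (2012) applied to the regression problem in the kernel mean-embedding features, with the variance proxy inflated to $\sqrt{4+\sigma^2}$ because the residual $\xi_t = f(x_t,c_t) - \bar{f}_t(x_t)$ is a bounded, conditionally mean-zero perturbation. You simply spell out the details (martingale-difference property of $\xi_t$, identification of $K_t$ as the Gram matrix of the embeddings) that the paper leaves implicit.
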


Using the confidence bounds, we define the UCB action, 
\begin{align}
x_t = \argmax_{x \in \xX} \hat{f}_t(x) + \beta_t \sigma_{t}(x) \text{ .}
\end{align}

It remains to bound the regret of the UCB algorithm. The proof is standard except that we use Lemma \ref{lemma: kernel mean concentration} to show concentration of the estimator. For details see \cite[Theorem 4.1]{abbasi2012online} or \cite[Theorem 3]{chowdhury2017kernelized}.
\begin{algorithm}[h] 
	\begin{minipage}{\textwidth}
		Initialize $\hat{f}_0$, $\sigma_0$ \\
		\textbf{For} step $t=1,2,\dots, T$:
		\begin{addmargin}[2em]{0pt}
			\textit{Environment} chooses $\mu_t \in \pP(\cC)$ \hfill \textit{// context distribution}\\
			\textit{Learner} observes $\mu_t$\\

			\textit{// definitions for expected version}\\
			$[k_{t}(x)]_s := \EE_{c \sim \mu_t}[k_{x_s,\mu_s}(x,c)]$ \quad for $s=1,\dots, t-1$ \hfill\\
			$s_t(x) := \EE_{c\sim\mu_t, c'\sim \mu_t}[k(x, c, x, c')]$ \\
			
			\textit{// definitions for sampled version}\\
			Sample $\tilde{c}_{t,1}, \dots, \tilde{c}_{t,L} \sim \mu_t$ \hfill \textit{// sample context distribution}\\
			$[k_{t}(x)]_s := \frac{1}{L} \sum_{i=1}^L k_{x_s,\mu_s}(x, \tilde{c}_i)$ \quad for $s=1,\dots, t-1$ \hfill \\
			$s_t(x) := \frac{1}{L^2} \sum_{i,j=1}^L k(x, \tilde{c}_{t,i}, x, \tilde{c}_{t,j})$ \\
			
			$\hat{f}_t(x) := k_t(x)^\T (K_t + \lambda \mathbf{I})^{-1} y_t$ \hfill \textit{// compute estimate}\\
			$\sigma_{t}^2(x) := \frac{1}{\lambda}\big(s_t(x) - k_t(x)^\T (K_t + \lambda \mathbf{I})^{-1}k_t(x) \big)$\hfill \textit{// confidence width} \\
			
			Set $\beta_t$ as in Lemma \ref{lemma: kernel mean concentration}\\
			Choose action $x_t \in \argmax_{x \in \xX}  \hat{f}_{t-1}(x) + \beta_t\sigma_{t-1}(x) $ \hfill \textit{// UCB action}\\
			
			\textit{Environment} provides $y_t = f(x_t,c_t) + \epsilon$ where $c_t \sim \mu_t$ \hfill \textit{// reward observation}\\
			Store $y_{t+1} := [y_1, \dots, y_t]$ \hfill \textit{// observation vector}\\
			
			 \textit{// Kernel matrix update, expected version}\\
			$K_{t+1}(x) := [\<k_{x_a,\mu_a},k_{x_b,\mu_b} \>]_{1 \leq a,b \leq t}$ with $\<k_{x_a,\mu_a},k_{x_b,\mu_b} \>= \EE_{\mu_a, \mu_b}[k(x_a, c_a, x_b, c_b)]$\\ 
			$k_{x_t,\mu_t} := \EE_{c \sim \mu_t}[k_{x_t,c}]$ \hfill \textit{// kernel mean embeddings}\\

			  \textit{// Kernel matrix update, sampled version}\\
			 $K_{t+1}(x) := [\<k_{x_a,\mu_a},k_{x_b,\mu_b} \>]_{1 \leq a,b \leq t}$ with $\<k_{x_a,\mu_a},k_{x_a,\mu_a} \> = \frac{1}{L^2} \sum_{i,j=1}^L k(x_a, \tilde{c}_{a,i}, x_b, \tilde{c}_{b,j})$\\ 
	 	    $k_{x_t,\mu_t} := \frac{1}{L}\sum_{i=1}^L k_{x_t,\tilde{c}_{t,i}} = \frac{1}{L}\sum_{i=1}^L k(x_t,\tilde{c}_{t,i}, \cdot, \cdot)$ \hfill \textit{// sample kernel mean embeddings}\\

		\end{addmargin}
	\end{minipage}
	\caption{UCB for RKHS bandits with context distributions}\label{alg: distribution ucb kernel}
\end{algorithm}

\section{Details on the Experiments}

We tune $\beta_t$ over the values $\{0.5,1,2,5,10\}$. In the synthetic experiment we set $\beta_t=2$ for the \emph{exact} and \emph{observed} variant and $\beta_t=10$ for the \emph{hidden} experiment. In the both experiments based on real-world data, we set $\beta_t=1$ for all variants.

\subsection{Movielens}\label{app: details movielens}

We use matrix factorization based on singular value decomposition (SVD) \citep{koren2009matrix} which is implemented in the Surprise library \citep{hug2017suprise} to learn 6 dimensional features $v_u$, $w_m$ for users $u$ and movies $m$ (this model obtains a RMSE $\approx 0.88$ over a 5-fold cross-validation). In the linear parameterization this corresponds to 36-dimensional features $\phi_{m,u} = v_uw_m^T$. We use the demographic data (gender, age and occupation) to group the users. The realized context is set to a random user in the data set, and the context distribution is defined as the empirical distribution over users within the same group as the chosen user.

\subsection{Crops Yield Dataset}\label{app: details agroscope}

Recall that the data set $\dD=\{(x_i, w_i, y_i) : i=1,\dots,8849\}$ consists of crop identifiers $x_i$, normalized yield measurements $y_i$ and site-year features $w_i \in \RR^{16+10}$ that are based on 16 suitability factors computed from weather measurements and a 1-hot encoding for each site (out of 10). The suitability factors are based on the work of \cite{holzkamper2013identifying}. To obtain a model for the crop yield responses, we train a bilinear model on the following loss \citep{koren2009matrix}, 
\begin{align}
\lL(W,V) = \sum_{i=1}^n (y_i - w_i^\T W V_{x_{i}})^2  + \|v_{j_i}\|^2 + \|w_{x_i}^\T W\|_2^2 \text{ .}
\end{align}
where $V = (V_{x_j})_{j=1}^k$ are the crop features $V_{x_j} \in \RR^5$ and $W \in \RR^{26\times6}$ is used to compute site features $w_i^\T W$ given the suitability features $w_i$ from the dataset.

We also use an empirical noise function created from the data. Since the data set contains up to three measurements of the same crop on a specific site and a year, we randomly pick a measurement and use its residual w.r.t.\ the mean of all measurements of the same crop under exactly the same conditions as noise. This way we ensure that the observation noise of our simulated environment is of the same magnitude as the noise on the actual measurements.


\end{document}